\newtheorem{Definition}{Definition}
\newtheorem{Theorem}{Theorem}
\newtheorem{Lemma}{Lemma}
\newtheorem{Proposition}{Proposition}
\newcommand{\R}{\mathbb{R}}
\newcommand{\EE}[1]{\mathbb{E}\left[{#1}\right]}
\newcommand{\PP}[1]{\mathbb{P}\left\{{#1}\right\}}
\newcommand{\PPst}[2]{\mathbb{P}\left\{{#1}\  \middle| \ {#2}\right\}}
\newcommand{\One}[1]{{\mathbbm{1}}\left\{{#1}\right\}}
\newcommand{\iidsim}{\stackrel{\textnormal{iid}}{\sim}}
\newcommand{\bbR}{\mathbb{R}}
\newcommand{\cA}{\mathcal{A}}
\newcommand{\cD}{\mathcal{D}}
\newcommand{\cY}{\mathcal{Y}}
\newcommand{\cX}{\mathcal{X}}
\newcommand{\cE}{\mathcal{E}}
\newcommand{\hT}{{\widehat{T}}}
\newcommand{\hr}{{\widehat{r}}}
\newcommand{\hf}{{\widehat{f}}}
\newcommand{\cZ}{\mathcal{Z}}
\newcommand{\hy}{{\widehat{y}}}
\newcommand{\indi}{{\mathds{1}}}
\newcommand{\bbP}{\mathbb{P}}
\newcommand{\bbE}{\mathbb{E}}
\begin{document}
	\title{Is Algorithmic Stability Testable? A Unified Framework under Computational Constraints}
\author{Yuetian Luo\thanks{Data Science Institute, University of Chicago} \ and Rina Foygel Barber\thanks{Department of Statistics, University of Chicago}}
	\date{}
	\maketitle

	\bigskip

\begin{abstract}
Algorithmic stability is a central notion in learning theory that quantifies the sensitivity of an algorithm to small changes in the training data. If a learning algorithm satisfies certain stability properties, this leads to many important downstream implications, such as generalization, robustness, and reliable predictive inference. Verifying that stability holds for a particular algorithm is therefore an important and practical question. However, recent results establish that testing the stability of a black-box algorithm is impossible, given limited data from an unknown distribution, in settings where the data lies in an uncountably infinite space (such as real-valued data). In this work, we extend this question to examine a far broader range of settings, where the data may lie in any space---for example, categorical data. We develop a unified framework for quantifying the hardness of testing algorithmic stability, which establishes that across all settings, if the available data is limited then exhaustive search is essentially the only universally valid mechanism for certifying algorithmic stability. Since in practice, any test of stability would naturally be subject to computational constraints, exhaustive search is impossible and so this implies fundamental limits on our ability to test the stability property for a black-box algorithm.
\end{abstract}

\section{Introduction}\label{sec:intro}
For data lying in space $\cX \times \cY$, an algorithm is a map that maps data sets to fitted regression functions:
\begin{equation} \label{def:deter-alg}
	\cA: \  \cup_{n\geq 0} (\cX\times\cY)^n \rightarrow \big\{\textnormal{measurable functions $\cX\rightarrow \bbR$}\big\}.
\end{equation}
That is, $\cA$ takes as input $\cD=\{(X_i,Y_i)\}_{i \in [n]}$, a data set of any size $n$, and returns a fitted model $\hf_n: \cX\rightarrow\bbR$,
which maps a feature value $x$ to a fitted value, or prediction, $\hy = \hf_n(x)$. \footnote{Throughout this paper, we implicitly assume measurability for all
functions we define, as appropriate. For the algorithm $\cA$,
we assume measurability of the map
$\left( (x_1, y_1), \ldots, (x_n, y_n), x \right) \mapsto [\cA(\{(x_i,y_i)\}_{i\in[n]})](x)$.} {\it Algorithmic stability} is a notion that quantifies the sensitivity of an algorithm to small changes in the training data $\cD$. Since its introduction \citep{devroye1979distribution,rogers1978finite,kearns1997algorithmic}, algorithmic stability has been a central notion in machine learning and statistics. Intuitively,
it is a desirable property for learning algorithms---but it also leads to many downstream implications, as well.
For instance, it is known that stable algorithms generalize \citep{kearns1997algorithmic,bousquet2002stability,poggio2004general,shalev2010learnability}. It also plays a critical role in a number of different problems, such as model selection \citep{meinshausen2010stability,shah2013variable,ren2023derandomizing}, reproducibility \citep{yu2013stability}, predictive inference \citep{steinberger2018conditional,barber2021predictive,liang2023algorithmic}, selective inference \citep{zrnic2023post} and inference for cross-validation error \citep{austern2020asymptotics,bayle2020cross,kissel2022high}. 

Thus, verifying the stability of a given algorithm is an important and practical question. Some algorithms can easily be shown to be stable, such as $k$-nearest neighbors (kNN) \citep{devroye1979distribution} and ridge regression \citep{bousquet2002stability}. More recently, substantial progress has been made towards understanding the stability for more sophisticated algorithms, see \cite{hardt2016train,lei2020fine,deng2021toward} and references therein. However, many complex modern machine learning algorithms exhibit instability empirically, 
or even if this is not the case, they may be too complex for a theoretical stability guarantee to be possible to obtain. 
It is common to attempt to stabilize a black-box algorithm by using generic methods such as bootstrap, bagging, or subsampling \citep{efron1994introduction,breiman1996bagging,elisseeff2005stability,chen2022debiased,soloff2023bagging} so that stability is ensured---but these methods also have their limitations, such as loss of statistical efficiency due to subsampling.
Consequently, this motivates the question of whether we can estimate the stability of an algorithm through empirical evaluation, rather than theoretical calculations. This idea is formalized in the work \cite{kim2021black} where the authors consider inferring the stability of an algorithm through ``{\it black-box}'' testing without peering inside the construction of the algorithm. Under the setting when $\cX$ and $\cY$ spaces have uncountably infinitely many elements and the black-box test can call $\cA$ countably infinitely many times, \cite{kim2021black} established a fundamental bound on our ability to test for algorithmic stability in the absence of any assumptions on the data distribution and the algorithm $\cA$.

However, in many common settings, such as categorical features or a binary response, we may have \emph{finite} data spaces $\cX$ and/or $\cY$.
The results of \cite{kim2021black}, which assume $\cX$ and/or $\cY$ to be infinite, cannot be applied to characterize the hardness of testing stability in these settings---and indeed, we would expect the testing problem to be fundamentally easier if there are only finitely many possible values of the data. On the other hand, in 
practice, we only have a finite amount of computational resources in order to
run our test of stability.  This leads to a key motivating question: is it possible to construct a black-box test
of algorithmic stability in finite data spaces $\cX$ and $\cY$, and what role do computational constraints play in this question? 

\subsection{Overview of contributions}
In this paper, we extend the results in \cite{kim2021black} to a far broader range of settings and establish fundamental bounds for testing algorithmic stability with computational constraints. Specifically, we provide an explicit upper bound on the power of any universally valid black-box test for testing algorithmic stability in terms of the number of available data, stability parameters, computational budget, and the sizes of the data spaces  $\cX$ and $\cY$.

\paragraph{Organization.}
In Section~\ref{sec:background}, we will present some background 
on algorithmic stability and black-box testing framework under computational constraints.
Section~\ref{sec:main-results} presents our main results on the limit of black-box tests for algorithmic stability with computational constraints. A brief discussion is provided in Section~\ref{sec:discussion}. We prove the main results in Section \ref{sec:main-proof}. Extensions and additional proofs are deferred to the Appendix.

\section{Background and problem setup} \label{sec:background}

In this section, we formalize some definitions and give additional background for studying algorithmic stability via black-box tests. In \eqref{def:deter-alg}, we have defined the algorithm as a deterministic map given the training data. In many settings, commonly used algorithms might also include some form of randomization---for instance, stochastic gradient
descent type methods. To accommodate this, we will expand the definition of an algorithm to take an additional argument,
\begin{equation}\label{eqn:define_alg}\cA:  \ \cup_{n\geq 0} (\cX\times\cY)^n \times [0,1] \rightarrow \big\{\textnormal{measurable functions $\cX\rightarrow\bbR$}\big\}.\end{equation}
The fitted model $\hf_n = \cA\big(\{(X_i,Y_i)\}_{i \in [n]} ; \xi\big)$ is now obtained by running algorithm $\cA$
on data set $\{(X_i,Y_i)\}_{i \in [n]}$, with randomization provided by the argument $\xi\in[0,1]$, 
which acts as a random seed. Of course, the general definition~\eqref{eqn:define_alg}
of a randomized algorithm also accommodates the deterministic (i.e., non-random) setting and we refer any algorithm $\cA$ that depends only on the data set $\{(X_i,Y_i)\}_{i\in [n]}$, and ignores the second argument $\xi$ as deterministic algorithm. In this paper, we focus on the setting where $\cA$ is symmetric in the training data---that is, permuting the order of the $n$ data points does not change the output of $\cA$, i.e.,
\begin{equation*}
	\cA[(x_1, y_1), \ldots, (x_n, y_n); \xi ] = \cA[(x_{\sigma(1)}, y_{\sigma(1)}), \ldots, (x_{\sigma(n)}, y_{\sigma(n)}); \xi' ],
\end{equation*} for some coupling of the randomization terms $\xi, \xi'  \sim \textnormal{Unif}[0,1]$.

\subsection{Algorithmic stability and hypothesis testing formulation}
There are many different notions of algorithmic stability in the literature (see, e.g., \cite{bousquet2002stability} and Appendix A of \cite{shalev2010learnability} for some comparisons). In this work, we follow \cite{kim2021black} and consider testing the following notion of stability: 
  \begin{Definition}[Algorithmic stability] \label{def:stability}
 Let $\cA$ be a symmetric algorithm.
 Let $\epsilon \geq 0$ and $\delta \in [0,1)$. We say that $\cA$ is $(\epsilon, \delta)$-stable with respect to training data sets of size $n$ from a data distribution $P$---or, for short, the triple $(\cA, P, n)$ is $(\epsilon, \delta )$-stable---if 
 \begin{equation*}
 	\bbP \{ |\hf_n(X_{n+1}) - \hf_{n-1}(X_{n+1}) | > \epsilon \} \leq \delta,
 \end{equation*} where 
 \[\hf_n = \cA\left(\{(X_i,Y_i)\}_{i\in[n]}; \xi\right), \quad \hf_{n-1} = \cA\left(\{(X_i,Y_i)\}_{i\in[n-1]}; \xi\right),\]
and where the data is distributed as $(X_i, Y_i) \iidsim P$, and the random seed $\xi \sim \textnormal{Unif}[0,1]$ is drawn independently of the data.
 \end{Definition} 
In the setting of predictive inference for regression problems, this notion of stability has appeared in \cite{barber2021predictive} where it was shown that this condition is sufficient for predictive coverage guarantees of the jackknife and jackknife+ methods. Earlier work by \cite{steinberger2018conditional} also established coverage properties for jackknife under a similar stability property. 

\paragraph{Defining a hypothesis test.} Given a fixed $\epsilon \geq 0$ and $\delta \in [0,1)$, we are interested in testing
\begin{equation*}
	H_0: (\cA,P,n) \textnormal{ is not } (\epsilon, \delta )\textnormal{-stable} \quad \textnormal{v.s.} \quad H_1: (\cA,P,n) \textnormal{ is } (\epsilon, \delta )\textnormal{-stable}
\end{equation*}
with some desired bound $\alpha \in (0,1)$ on the type-I error---that is, a bound on the probability that we falsely certify an algorithm as stable. In this work, we consider the problem of running this hypothesis test with limited
access to data from $P$---specifically, we assume that the available data consists of a labeled data set, $\cD_{\ell} = \{ (X_i, Y_i)_{i \in [N_{\ell}]} \} \iidsim P $,  and an unlabeled data set,  $\cD_u = \{ X_{N_{\ell}+1}, \cdots, X_{N_{\ell} + N_u} \} \iidsim P_X$, where $P_X$ is the marginal distribution of $X$ under $P$ (and $\cD_{\ell}$ and $\cD_u$ are sampled independently). With the aim of avoiding placing any assumptions on the algorithm $\cA$ and on the distribution $P$, we would like to seek a test $\hT(\cA, \cD_{\ell}, \cD_u) \in \{0,1 \}$ that satisfies the following notion of {\it assumption-free} validity:
 \begin{equation}\label{eqn:validity}\bbP_P\{ \hT(\cA,\cD_{\ell}, \cD_u) = 1 \}\leq \alpha \textnormal{ for any $(\cA,P,n)$ that is not $(\epsilon,\delta)$-stable}.\end{equation}
Although our notation accommodate randomized algorithms, in some cases if we know we are working with deterministic algorithms, we might not need to assume the above strong notion of validity, instead we can require its deterministic conterpart:
 \begin{equation}\label{eqn:validity-deter}\bbP_P\{ \hT(\cA,\cD_{\ell}, \cD_u) = 1 \}\leq \alpha \textnormal{ for any $(\cA,P,n)$ that is not $(\epsilon,\delta)$-stable and } \cA \textnormal{ is deterministic}.\end{equation}

\subsection{Black-box tests with computational constraints} \label{sec:def-black-box-test}

To understand the difficulty of testing algorithmic stability, we need to 
formalize the limitations on what information is available to us, as the analyst, for performing the test.
For example, if we know the distribution $P$ of the data, then for many algorithms, we might be able to study its stability property analytically. Clearly, this is not the setting of interest, since in practice we cannot rely on assumptions about the data distribution---but in fact, even if $P$ is unknown, analyzing the stability for simple algorithms such as kNN or ridge regression is still tractable \citep{devroye1979distribution,bousquet2002stability}. In modern settings, however, a state-of-the-art algorithm $\cA$ is typically far more complex and may be too complicated to analyze theoretically. For this reason, we consider the black-box testing framework introduced in \cite{kim2021black} which analyzes the stability of algorithm $\cA$ via empirical evaluations, rather than theoretical calculations. The intuition is this: if $\cA$ is implemented as a function in some software package, our procedure is allowed to call the function, but is not able to examine the implementation of $\cA$ (i.e., we cannot read the code that defines the function).

 A high-level illustration of a black-box test of interest is provided in Figure \ref{fig:black-box-alg} and the formal definition is provided in Definition \ref{def:black-box-test}. Compared to the definition of black-box test given in \cite{kim2021black}, here we focus on black-box tests with computational constraints. In particular, the computational budget for training is characterized by $B_{\textnormal{train}}$---the total number of data points that we are allowed to input to the algorithm $\cA$, over all calls to $\cA$ (for example, when running the test, we can choose
to train $\cA$ on a data set of size $n$, up to $B_{\textnormal{train}}/n$ many times). 

\begin{figure}[t]
	\centering\medskip
	\fbox{
	\begin{tikzpicture}[
	roundnode/.style={circle, draw=gray!60, fill=gray!5, thick, minimum size=13mm},
	  >=stealth,
    node distance=3cm,
    auto
	]
	\node (left) [fill=blue!25,text width=3.3cm] at (-3.8,0) {Input:\\ labeled data $\cD_{\ell}$,\\ unlabeled data $\cD_{u}$,\\ algorithm $\cA$,\\
	 seeds $\zeta,\zeta_1,\zeta_2,\dots$};
		\draw[-latex, line width=0.5mm] (-2,0) -> (-1,0);
		\draw[black, thick] (-1,-3.3) rectangle (4,2.3);
		\node (upper1) [text width=8cm] at (2,3.) {Iterate for rounds $r = 1,2,\ldots,\hr$, subject to\\ computational constraint  $\sum_{r=1}^{\hr}|\cD^{(r)}_\ell|\leq B_{\textnormal{train}}$};
			\node (upper) [fill=blue!25,text width=4.5cm] at (1.5,1.3) {Generate labeled data set $\cD^{(r)}_{\ell}$ and random seed $\xi^{(r)}$};
		\node (down) [fill=blue!25,text width=4.5cm] at (1.5,-2.1) {Call the algorithm to train a model:\\ $\hf^{(r)} = \cA(\cD^{(r)}_{\ell};\xi^{(r)})$ };
		\draw[ultra thick,->] (0,0.7) .. controls (-0.5,-0.2) .. (0.,-1.2);
		\draw[ultra thick,->] (2.9,-1.2) .. controls (3.4,-0.2) .. (2.9,0.7);
		\draw[-latex, line width=0.5mm] (4.,0) -> (4.8,0);
		\node (right) [fill=blue!25,text width=3.7cm] at (6.8,0) {Output:\\ $\hT(\cA,\cD_{\ell}, \cD_u ) \in \{0,1\}$};
	\end{tikzpicture}
}
	
	\caption{Black-box test with computational constraints. Here in round $r$, the generating process of the new data set $\cD_{\ell}^{(r)}$ and random seed $\xi^{(r)}$ for training can depend on the input data and on all the past information from previous rounds---e.g., we may create a new data set by resampling from past data. The input random seeds $\zeta,\zeta_1,\zeta_2,\dots\iidsim\textnormal{Unif}[0,1]$ may be used to provide randomization if desired. The procedure stops at a data-dependent stopping time $\hr$, with the constraint that the total number of training points used in calls to $\cA$ cannot exceed the budget $B_{\textnormal{train}}$.
	}
	\label{fig:black-box-alg}
\end{figure}
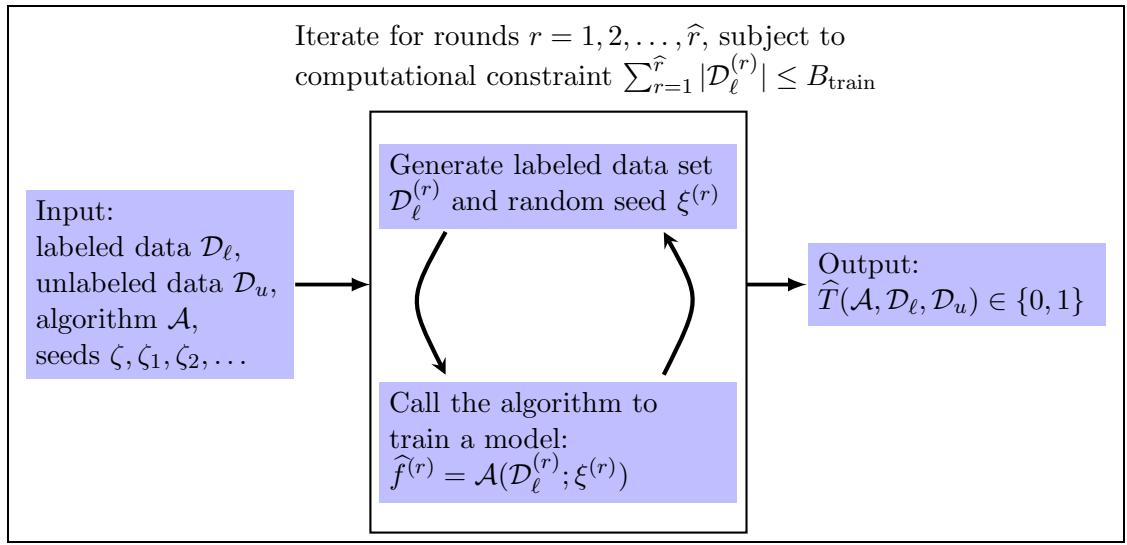

\begin{Definition}[Black-box test under computational constraints] \label{def:black-box-test} Let $\mathfrak{D}_{\ell} = \cup_{m \geq 0} (\cX \times \cY)^m$ denotes the space of labeled data sets of any size, and let $\mathfrak{D}_u = \cup_{m \geq 0} (\cX)^m$ denotes the space of unlabeled data sets of any size.
Consider any function $\hT$ that takes as input an algorithm $\cA$, a labeled data set $\cD_{\ell}\in \mathfrak{D}_{\ell}$, and an unlabeled data set $\cD_u \in \mathfrak{D}_u$,
and returns a (possibly randomized) output $\hT(\cA,\cD_{\ell}, \cD_u)\in\{0,1\}$. 
We say that $\hT$ is a black-box test with computational budget $B_{\textnormal{train}} >0$  for model training if it can be defined as follows:  
 for some functions $g^{(1)},g^{(2)},\dots$ and $g$, and for random seeds, $\zeta^{(1)},\zeta^{(2)},\dots,\zeta\iidsim  \textnormal{Unif}[0,1]$,
 		 \begin{enumerate}[leftmargin=*]
	 	\item For each stage $r = 1,2,3,\ldots$, 
	 	\begin{enumerate}
	 	\item If $r = 1$, generate a new labeled data set and a new random seed,
		\[(\cD_{\ell}^{(1)},\xi^{(1)} ) = g^{(1)} (\cD_{\ell}, \cD_u, \zeta^{(1)} ).\]
		 \begin{itemize}
		 	\item If some stopping criterion is reached or $|\cD^{(1)}_{\ell}| > B_{\textnormal{train}}$, exit the for loop; otherwise compute the fitted model $\hf^{(1)} = \cA( \cD_{\ell}^{(1)}; \xi^{(1)} )$.
		 \end{itemize}
	 	\item If $r \geq 2$, generate a labeled data set and a new random seed,
	 	\begin{equation*}
	 		(\cD_{\ell}^{(r)},\xi^{(r)} ) = g^{(r)} \big(\cD_{\ell}, \cD_u, (\cD_{\ell}^{(s)})_{1\leq s<r}, (\hf^{(s)})_{1\leq s<r}, (\zeta^{(s)})_{1\leq s\leq r} , (\xi^{(s)})_{1\leq s<r}\big).
	 	\end{equation*}
		\begin{itemize}
			\item If some stopping criterion is reached or $\sum_{j = 1}^{r} |\cD_{\ell}^{(j)}| > B_{\textnormal{train}}$, exit the for loop; otherwise compute the fitted model $\hf^{(r)} = \cA( \cD_{\ell}^{(r)}; \xi^{(r)} )$.
		\end{itemize}
	 	\end{enumerate}
		\item Let $\hr$ denote the total number of rounds when stopping.
	 	\item Finally, return
		\[\hT(\cA,\cD_{\ell}, \cD_u ) = g\big(\cD_{\ell}, \cD_u,(\cD_{\ell}^{(r)})_{1\leq r\leq \hr},(\hf^{(r)})_{1\leq r\leq \hr},(\zeta^{(r)})_{1\leq r\leq \hr},(\xi^{(r)})_{1\leq r\leq \hr},\zeta\big).\]
		\end{enumerate}
\end{Definition}
The random seeds $\zeta^{(1)},\zeta^{(2)},\dots$ are included to allow for randomization at each stage of the test, if desired. For example,
if the data set $\cD^{(r)}_{\ell}$ is computed via subsampling from the available real data $\cD_{\ell}$ (e.g., a bootstrap sample),
then $\zeta^{(r)}$ may be used to generate the random subsample. Similarly, the random seed $\zeta$ is included 
to allow for randomization, if desired,
in the final step of the test---i.e., computing $\hT\in\{0,1\}$.

We note that for the black-box test defined in Definition \ref{def:black-box-test}, while it can not examine the theoretical properties of $\cA$, it knows the analytic expression of fitted models $\hf^{(r)}$ (See also Appendix \ref{app:black-box-test-black-box-model} for an alternative type of black-box test). This offers significantv flexibility to the analyst---and thus, establishing the hardness of testing algorithmic stability in this setting is a strong result.

\paragraph{An example of a black-box test.} As an example of the type of test that would satisfy Definition \ref{def:black-box-test}, we can consider a simple sample-splitting based testing procedure. Recall that $N_{\ell}$ and $N_u$ denote the number of labeled and unlabeled data points we have. Define 
\begin{equation} \label{def:kappa}
	\kappa = \kappa(n, B_{\textnormal{train}}, N_{\ell}, N_u) = \min \left\{ \frac{B_{\textnormal{train}}}{n}, \frac{N_{\ell}}{n}, \frac{N_{\ell} + N_u}{n}  \right\}.
\end{equation} Then, $\lfloor \kappa \rfloor$ is the largest number of copies of independent datasets---each consisting of $n$ labeled training points and one unlabeled test point---that can be constructed from $\cD_{\ell}$ and $\cD_u$ and we can run under the computational constraint.
\begin{itemize}
	\item For each $k = 1, \ldots, \lfloor \kappa \rfloor$, train models
	\begin{equation*}
		\begin{split}
			\hf^{(k)}_n &= \cA[(X_{(k-1)n +1}, Y_{(k-1)n + 1}), \ldots, (X_{kn}, Y_{kn}); \xi^{(k)} ],\\
			\hf^{(k)}_{n-1} &= \cA[ (X_{(k-1)n +1}, Y_{(k-1)n + 1}), \ldots, (X_{kn-1}, Y_{kn-1}); \xi^{(k)} ],
		\end{split}
	\end{equation*} with $\xi^{(k)} \overset{i.i.d.}\sim \textnormal{Unif}[0,1]$, and compute the difference in predictions at test point $X_{ \lfloor \kappa \rfloor n + k }$
	\begin{equation} \label{eq:delta-k}
		\Delta^{(k)} = \Big| \hf^{(k)}_n(X_{ \lfloor \kappa \rfloor n + k } ) - \hf^{(k)}_{n-1}(X_{ \lfloor \kappa \rfloor n + k }) \Big|. 
	\end{equation}
	\item Compute $B = \sum_{k=1}^{ \lfloor \kappa \rfloor } \indi( \Delta^{(k)} > \epsilon )$, and compare this test statistic against the Binomial$( \lfloor \kappa \rfloor, \delta )$ distribution, returning $1$ if $B$ is sufficiently small and $0$ otherwise.
\end{itemize}

\section{ Main results on black-box test for algorithmic stability with computational constraints} \label{sec:main-results}
In this section, we present the main results on power upper bounds of testing algorithmic stability for computationally constrained black-box tests. First, given any $(\cA, P,n)$, let us define 
\begin{equation} \label{def:delta_star}
\delta^*_{\epsilon} = \bbP \{ | \hf_n(X_{n+1}) - \hf_{n-1}(X_{n+1}) | > \epsilon  \},
\end{equation} i.e., $\delta^*_{\epsilon}$ is the smallest possible value of $\delta'$ so that $(\cA,P,n)$ is $(\epsilon, \delta')$-stable. 

Next, we state two results regarding the power of testing algorithmic stability for any algorithms or deterministic algorithms.

\begin{Theorem} \label{thm:limits_evaluate-random} Fix any $\epsilon \geq 0$, $\delta \in [0,1 )$. Let $\hT$ be any black-box test with computational budget $B_{\textnormal{train}}$ defined in Definition \ref{def:black-box-test}. Suppose $\hT$ satisfies the assumption-free validity condition \eqref{eqn:validity} at level $\alpha$. Then for any algorithm $\cA$ with $(\cA,P,n)$ that is $(\epsilon,\delta )$-stable (i.e., $\delta^*_{\epsilon} \leq \delta$), the power of $\hT$ is bounded as 
\begin{itemize}
	\item(computational constraint due to budget $B_{\textnormal{train}}$) \begin{equation}\label{ineq:power-bound-comp-rand}
			\bbP \{ \hT = 1 \} \leq \alpha \cdot  \left( \frac{1 - \delta^*_{\epsilon} }{1 - \delta} \right)^{\frac{B_{\textnormal{train} }}{n} }.
\end{equation}. 
	\item(statistical constraint due to limited samples in $\cX$ or $\cY$) 
	\begin{equation}\label{ineq:power-bound}
		\bbP \{ \hT = 1 \} \leq \alpha \cdot \min \Bigg\{  \frac{\left( \frac{1 - \delta^*_{\epsilon} }{1 - \delta} \right)^{ \frac{N_{\ell}}{n} }}{1 - \frac{B_{\textnormal{train}}}{|\cY|} \wedge 1 } , \frac{\left( \frac{1 - \delta^*_{\epsilon} }{1 - \widetilde{\delta} }\right)^{ \frac{N_{\ell} + N_u}{n} }}{1 - \frac{B_{\textnormal{train}}}{|\cX|}\wedge 1 }   \Bigg\},
\end{equation} where $\widetilde{\delta} = (\delta + \frac{1}{n} ) \wedge 1$. 
\end{itemize} 
\end{Theorem}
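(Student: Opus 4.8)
\emph{Proof proposal.} My plan is to show directly that for any $(\epsilon,\delta)$-stable triple $(\cA,P,n)$ one can manufacture an \emph{unstable} triple $(\cA',P',n)$ whose test-acceptance probability under $\hT$ dominates that of $(\cA,P,n)$, up to the reciprocal of the factor claimed in the theorem; applying the validity condition \eqref{eqn:validity} to $(\cA',P',n)$ then caps $\bbP\{\hT(\cA,\cD_\ell,\cD_u)=1\}$ at the stated bound. The three bounds arise from three ways of building such an $\cA'$ that $\hT$ cannot distinguish from $\cA$, each exploiting a different scarce resource: the training budget $B_{\textnormal{train}}$ in \eqref{ineq:power-bound-comp-rand}; the labeled data together with $|\cY|$; and the total data together with $|\cX|$ in \eqref{ineq:power-bound}. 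The common device is a ``single-call'' perturbation: on size-$n$ inputs, $\cA'$ returns the same model as $\cA$ on a ``good'' event and instead returns a model with a jump exceeding $\epsilon$ at the test point on the complementary ``bad'' event, tuned so the bad event has probability exactly $\delta$ under $P'$ while the corresponding event for $\cA$ has probability $\delta^*_\epsilon$; this renders $(\cA',P',n)$ unstable while changing the law of one fitted model only by a likelihood-ratio factor $\tfrac{1-\delta^*_\epsilon}{1-\delta}$ on the good event.

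For \eqref{ineq:power-bound-comp-rand} I would keep $P'=P$ and drive the bad event by the algorithm's own seed, so the perturbation lives in seed space and $\cA'$ is genuinely randomized. Since a black-box test calling $\cA$ under budget $B_{\textnormal{train}}$ can afford the equivalent of at most $B_{\textnormal{train}}/n$ size-$n$ training calls, I would couple the run of $\hT$ on $\cA$ with its run on $\cA'$ call by call: the two runs agree on each call except for the per-call slack above, and multiplying the at most $B_{\textnormal{train}}/n$ likelihood-ratio factors yields $\bbP\{\hT(\cA,\cD_\ell,\cD_u)=1\}\le\alpha\cdot(\tfrac{1-\delta^*_\epsilon}{1-\delta})^{B_{\textnormal{train}}/n}$. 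A routine but necessary check is that calls of size other than $n$ do not escape this accounting, each budget unit contributing at most a $(\tfrac{1-\delta^*_\epsilon}{1-\delta})^{1/n}$ factor.

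For the two terms in \eqref{ineq:power-bound} the perturbation must hide in the \emph{data space}, since here it is the shortage of samples, not of compute, that we exploit. For the $|\cY|$-term I would draw a response level $y^\star$ uniformly from $\cY$ and let $\cA'$ deviate from $\cA$ only on training sets containing a point labeled $y^\star$; averaging over $y^\star$, the at most $B_{\textnormal{train}}$ data points the test ever feeds to the algorithm include $y^\star$ with probability at most $\tfrac{B_{\textnormal{train}}}{|\cY|}\wedge1$, which produces the $\big(1-\tfrac{B_{\textnormal{train}}}{|\cY|}\wedge1\big)^{-1}$ denominator, while the test's genuine labeled data yields only about $N_\ell/n$ fresh size-$n$ blocks on which the single-call construction applies, giving the exponent $N_\ell/n$. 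The $|\cX|$-term is parallel, with the deviation hidden at a uniformly random feature value $x^\star$ that the test may access through any of its $N_\ell+N_u$ feature observations; because $x^\star$ may itself land among the $n$ training points and so change the model exactly where the jump is planted, the instability of $\cA'$ only survives with an extra $1/n$ of slack, which is why $\delta$ becomes $\widetilde\delta=(\delta+\tfrac1n)\wedge1$ in that term. Taking the better of the two constructions gives the minimum.

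The real obstacle, I expect, is making the single-call perturbation simultaneously (a) genuinely destabilizing — the bad event must carry probability at least $\delta-\delta^*_\epsilon$, a fixed non-negligible amount, so this is not a measure-zero trick — and (b) invisible to a test that is \emph{adaptive} and that, by Definition~\ref{def:black-box-test}, observes the full analytic form of every returned model $\hf^{(r)}$, not just its values at a handful of points. Reconciling the two forces the perturbation into a coordinate of the input — a seed value, or a rare response or feature value — toward which the test cannot steer its queries even adaptively, and then requires verifying that the induced per-call likelihood ratio is controlled by exactly $\tfrac{1-\delta^*_\epsilon}{1-\delta}$ (resp.\ $\tfrac{1-\delta^*_\epsilon}{1-\widetilde\delta}$), so that the exponents come out as stated. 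Next to this, the budget bookkeeping for heterogeneous call sizes and the $1/n$ corrections in the $|\cX|$-term are secondary technicalities.
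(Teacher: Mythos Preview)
Your high-level strategy—manufacture an unstable $(\cA',P',n)$ that $\hT$ cannot distinguish from $(\cA,P,n)$ and then invoke validity—matches the paper, as does your identification of the three hiding places (seed space, a rare $y$-value, a rare $x$-value).

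The genuine gap is in the two statistical bounds. You plant a uniformly random $y^\star\in\cY$ (resp.\ $x^\star\in\cX$) and let $\cA'$ deviate only on training sets containing it, but you never say how $P$ is perturbed—and without perturbing $P$, the construction fails: if $P$ places little or no mass at $y^\star$, a size-$n$ training set drawn from $P$ almost never contains $y^\star$, so $(\cA',P,n)$ is just as stable as $(\cA,P,n)$ and validity gives you nothing. The paper tilts the distribution, $P'=c\cdot(P_X\times\delta_{y^\star})+(1-c)P$, with $c$ just large enough that a size-$n$ sample from $P'$ contains $y^\star$ with probability exceeding $(\delta-\delta^*_\epsilon)/(1-\delta^*_\epsilon)$; this forces $c\gtrsim 1-\big(\tfrac{1-\delta}{1-\delta^*_\epsilon}\big)^{1/n}$. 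The exponent $N_\ell/n$ then comes from the \emph{sample-level} change of measure $\bbP_P\{\cdot\}\le(1-c)^{-N_\ell}\,\bbP_{P'}\{\cdot\}$ applied to the $N_\ell$ labeled draws (the unlabeled draws have the same $X$-marginal under $P$ and $P'$ and cost nothing). Your ``fresh size-$n$ blocks'' picture is not the mechanism and would not yield a proof: the test may reuse and recombine samples arbitrarily, and what must be controlled is the likelihood ratio on the raw $N_\ell$ samples, not on calls to $\cA$. The $\cX$-term is parallel with $P'=c\cdot(\delta_{x^\star}\times P_Y)+(1-c)P$, now tilting all $N_\ell+N_u$ feature draws; the $1/n$ slack ($\delta\to\widetilde\delta$) enters because the test point $X_{n+1}$ is also drawn from the mixture, so the instability lower bound carries a $(1-c)^{n+1}$ where one would like $(1-c)^n$.

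For the computational bound your seed-space idea and $P'=P$ are correct, but ``couple call by call and multiply likelihood-ratio factors'' does not survive the adaptivity you yourself flag: the test \emph{chooses} each $\xi^{(r)}$, so there is no per-call probability of avoiding the bad region. The paper defeats adaptivity by a combinatorial averaging rather than a coupling: partition $[0,1]$ into $M$ cells each carrying conditional instability exactly $\delta^*_\epsilon$, take the bad region $R$ to be a uniformly random union of $M_0\approx M\cdot\tfrac{\delta-\delta^*_\epsilon}{1-\delta^*_\epsilon}$ cells, observe that the at most $\lfloor B_{\textnormal{train}}/n\rfloor$ size-$n$ calls can touch at most that many cells \emph{regardless of how the seeds are chosen}, and average the per-$R$ bound $\bbP\{\hT=1;\cE_R\}\le\alpha$ over all $\binom{M}{M_0}$ choices; sending $M\to\infty$ recovers the exponent $B_{\textnormal{train}}/n$.
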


\begin{Theorem} \label{thm:limits_evaluate-deter} Fix any $\epsilon \geq 0$, $\delta \in [0,1 )$. Let $\hT$ be any black-box test with computational budget $B_{\textnormal{train}}$ defined in Definition \ref{def:black-box-test}. Suppose $\hT$ satisfies the assumption-free validity condition \eqref{eqn:validity-deter} at level $\alpha$ with respect to the class of deterministic algorithms. Then for any algorithm $\cA$ with $(\cA,P,n)$ that is $(\epsilon,\delta )$-stable (i.e., $\delta^*_{\epsilon} \leq \delta$) and $\cA$ is deterministic,
\begin{itemize}
	\item (computational constraint due to budget $B_{\textnormal{train}}$) if the distribution $P$ satisfies 
\begin{equation*}
	\sup_{(x,y) \in \cX \times \cY} \bbP(\{x,y\}) < 0.2,
\end{equation*}
then there is a universal constant $C > 0$ such that the power of $\hT$ is bounded as 
\begin{equation}\label{ineq:power-bound-comp-deter}
		\bbP \{ \hT = 1 \} \leq \left(\alpha + C/n \right)  \cdot \left( \frac{1 - \delta^*_{\epsilon} }{1 - \delta - \frac{1}{n} } \right)^{\frac{B_{\textnormal{train} }}{n} }.
\end{equation}
\item(statistical constraint due to limited samples in $\cX$ or $\cY$) the power of $\hT$ is bounded by \eqref{ineq:power-bound}.
\end{itemize}
\end{Theorem}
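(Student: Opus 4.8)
The plan is to prove both displays by pitting the given stable triple against a family of \emph{unstable} triples that a computation-limited black-box test cannot tell apart, in the same spirit as Theorem~\ref{thm:limits_evaluate-random}; the one new constraint is that, since $\hT$ is now only assumed valid against \emph{deterministic} unstable triples (condition~\eqref{eqn:validity-deter}), every hard instance must itself use a deterministic algorithm. This forced derandomization is exactly what produces the $\tfrac1n$ slack in the exponent of~\eqref{ineq:power-bound-comp-deter} and the $C/n$ slack in front, and what brings in the no-heavy-atom hypothesis $\sup_{(x,y)}\bbP(\{x,y\})<0.2$.

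\textbf{The statistical-constraint bound.} Since~\eqref{ineq:power-bound} is verbatim the inequality already appearing in Theorem~\ref{thm:limits_evaluate-random}, the route is to revisit the hard instances used to prove it there and check that they may be taken deterministic. Those instances are built from $(\cA,P,n)$ by reassigning $\cA$'s output on a deterministically described family of inputs so as to force a leave-one-out prediction gap exceeding $\epsilon$; the extra instability ``hides'' inside the finite label space $\cY$ (first term) or feature space $\cX$ (second term), so a budget of $B_{\textnormal{train}}$ can probe only a $\tfrac{B_{\textnormal{train}}}{|\cY|}$ (resp.\ $\tfrac{B_{\textnormal{train}}}{|\cX|}$) fraction of it --- hence the denominators $1-\tfrac{B_{\textnormal{train}}}{|\cY|}\wedge1$ and $1-\tfrac{B_{\textnormal{train}}}{|\cX|}\wedge1$ --- while the guaranteed instability margin is $\delta$ in the label case but only $\widetilde\delta=(\delta+\tfrac1n)\wedge1$ in the feature case, since there the perturbation acts through the $n$-point feature configuration. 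As all of this reassignment is deterministic whenever $\cA$ is, the argument transfers unchanged to any test satisfying~\eqref{eqn:validity-deter}, giving~\eqref{ineq:power-bound}.

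\textbf{The computational-constraint bound.} Fix a deterministic $(\epsilon,\delta)$-stable triple $(\cA,P,n)$ with $\sup_{(x,y)}\bbP(\{x,y\})<0.2$. I would build a randomized family $\{\cA_\theta\}$ of \emph{deterministic} algorithms, leaving $P$ fixed, such that (a) each $\cA_\theta$ fails to be $(\epsilon,\delta)$-stable with respect to $P$, by a margin of at most $\tfrac1n$, i.e.\ $\delta<\delta^*_\epsilon(\cA_\theta)\le\delta+\tfrac1n$; and (b) for every fixed data set, the secret $\theta$ alters the output only on a thin, $\theta$-dependent collection of ``poisoned'' $n$-point configurations. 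The poisoning itself is the familiar one: on a poisoned size-$n$ data set, perturb the fitted model at a designated test point so that it disagrees with the corresponding size-$(n-1)$ model by more than $\epsilon$, calibrating the $P$-probability of this event so as to push $\delta^*_\epsilon$ from its original value up to just past $\delta$. But since $\cA_\theta$ may not toss a fresh coin, the trigger must be hard-wired: order the finite support of $P$, partition it into blocks whose $P$-mass tracks the required increment, and let $\theta$ choose which blocks are live. The condition $\sup_{(x,y)}\bbP(\{x,y\})<0.2$ is what lets such blocks be formed with mass within $O(1/n)$ of the target and, crucially, ensures that no single configuration is poisoned for more than a bounded fraction of the $\theta$'s and that exhibiting one poisoned configuration does not help locate the others; the residual $O(1/n)$ mass mismatch upgrades $\delta$ to $\delta+\tfrac1n$ in the exponent and $\alpha$ to $\alpha+C/n$ in front.

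The proof then closes with a change-of-measure on the test's transcript. A black-box test with budget $B_{\textnormal{train}}$ can call $\cA$ on size-$n$ data sets at most $B_{\textnormal{train}}/n$ times, and --- this is the step that uses the robustness built into (b), so that adaptively chosen or synthetic training sets do no better than fresh i.i.d.\ ones --- after this reduction the events ``a prediction gap exceeding $\epsilon$ is revealed'' are governed by independent $\Ber(\delta^*_\epsilon)$ variables under $\cA$ and by independent $\Ber(\delta^*_\epsilon(\cA_\theta))$ variables under $\cA_\theta$. Since $\delta^*_\epsilon\le\delta<\delta^*_\epsilon(\cA_\theta)\le\delta+\tfrac1n$, the transcript law under $\cA$ is dominated by that under $\cA_\theta$ with likelihood ratio at most $\big(\tfrac{1-\delta^*_\epsilon}{1-\delta^*_\epsilon(\cA_\theta)}\big)^{B_{\textnormal{train}}/n}\le\big(\tfrac{1-\delta^*_\epsilon}{1-\delta-1/n}\big)^{B_{\textnormal{train}}/n}$ (the extremal transcript revealing no gap at all), so that $\bbP_P\{\hT(\cA)=1\}\le\big(\tfrac{1-\delta^*_\epsilon}{1-\delta-1/n}\big)^{B_{\textnormal{train}}/n}\big(\bbP_P\{\hT(\cA_\theta)=1\}+O(1/n)\big)$; averaging over $\theta$ and invoking $\bbP_P\{\hT(\cA_\theta)=1\}\le\alpha$ from~\eqref{eqn:validity-deter} then gives~\eqref{ineq:power-bound-comp-deter}. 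The main obstacle is plainly the deterministic poisoning in the previous paragraph: one must inject enough instability to cross $\delta$ while (i) keeping the trigger diffuse across $n$-point configurations so that $B_{\textnormal{train}}/n$ adaptive probes still realize only the claimed effective sample size, (ii) keeping each perturbed fitted \emph{function} (which the test reads off in closed form) uninformative about $\theta$, and (iii) controlling the discretization cost at $O(1/n)$ and checking the sharp threshold $0.2$. Once the construction and its robustness are in hand, the remaining bookkeeping is routine.
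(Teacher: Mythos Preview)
Your treatment of the statistical-constraint bound is right: the modified algorithms $\cA'$ in Lemmas~\ref{lem:Y} and~\ref{lem:X} are deterministic whenever $\cA$ is, so those proofs carry over verbatim under~\eqref{eqn:validity-deter}. This is exactly what the paper says.

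For the computational bound, your high-level picture---randomize over a secret $\theta$ indexing deterministic ``poisoned'' variants of $\cA$, argue the test cannot locate the poison within its budget, then average---matches the paper. But the mechanism you propose breaks down. Your key step asserts that ``after this reduction the events `a prediction gap exceeding $\epsilon$ is revealed' are governed by independent $\Ber(\delta^*_\epsilon)$ variables under $\cA$,'' and then applies a likelihood-ratio bound on transcripts. This is not available: the test may call $\cA$ on \emph{arbitrary} adaptively chosen data sets (not i.i.d.\ from $P$), and it sees the entire fitted function $\hf^{(r)}$, not just a binary gap indicator. There is no reduction to a product of Bernoullis, and no likelihood ratio between ``transcript laws'' to control. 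Also, $P$ need not have finite or even countable support, so ``order the finite support of $P$'' is not a valid starting point.

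The paper's device is different and avoids all of this. Partition $\cX\times\cY$ into $M$ pieces $C_1,\dots,C_M$ and associate to any size-$n$ data set $\cD$ its \emph{count vector} $c(\cD)\in\mathbf{I}_{n,M}$. For each $q\in\{0,1\}^{\mathbf{I}_{n,M}}$ define a deterministic $\cA'_q$ that outputs the unstable $\cA_1$ exactly when $|\cD|=n$ and $q_{c(\cD)}=1$, and $\cA$ otherwise. On the event that none of the queried data sets have a poisoned count vector, $\hT$ sees \emph{identical} fitted functions under $\cA$ and $\cA'_q$, so $\hT(\cA)=\hT(\cA'_q)$ there---this is a coupling, not a change of measure. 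Now take $q_{\mathbf{i}}\iidsim\Ber(\rho)$. Conditionally on the at most $B_{\textnormal{train}}/n$ count vectors the test visits, the probability none is poisoned is at least $(1-\rho)^{B_{\textnormal{train}}/n}$, regardless of adaptivity. Validity on the unstable $\cA'_q$'s gives $\alpha$; the only leakage is the event that $\cA'_q$ is accidentally stable, i.e.\ $\sum_{\mathbf{i}}(1-q_{\mathbf{i}})p_{\mathbf{i}}\ge 1-\delta$ where $p_{\mathbf{i}}=\bbP\{|\hf_n-\hf_{n-1}|\le\epsilon,\ c(\cD_n)=\mathbf{i}\}$. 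Hoeffding controls this by $\exp(-2t^2/\|p\|_\infty)$ with $t=1/n$ when $\rho=\frac{\delta-\delta^*_\epsilon+1/n}{1-\delta^*_\epsilon}$, and Lemma~\ref{lem:data_counts} with $M=6$ gives $\|p\|_\infty=O(n^{-5/2})$, so this term is $\le C/n$. The threshold $0.2$ is precisely $\frac{1}{M-1}$ with $M=6$: it is what allows the partition (Lemma~\ref{lem:partition}) and the multinomial anticoncentration to go through. So the $1/n$ in the exponent comes from the choice of $\rho$, the $C/n$ additive from the Hoeffding tail, and the $0.2$ from the combinatorics of six-way partitions---none of which your likelihood-ratio sketch recovers.
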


From Theorems \ref{thm:limits_evaluate-random} and \ref{thm:limits_evaluate-deter}, we can see the power of any universally valid test is bounded by the minimum of three terms. 
Here we take a closer look at each of the three terms to better understand how this limit on power arises. Let us take the randomized algorithm as an illustration example. 
\begin{itemize}
\item First term (\eqref{ineq:power-bound-comp-rand} or \eqref{ineq:power-bound-comp-deter}): computational constraint due to budget $B_{\textnormal{train}}$. 
The first term is low---that is, not much larger than $\alpha$, meaning that the power cannot be much better than random---whenever the ratio $B_{\textnormal{train}}/n$ is small.
This is intuitive: this ratio describes the number of times we are able to call $\cA$ on training samples of size $n$,
and we need this number to be large in order to distinguish between stability (i.e., large perturbations occur with small probability $\delta^*_\epsilon\leq \delta$)
versus instability (i.e., large perturbations occur with probability $>\delta$).
\item Second term (the first part of the bound in \eqref{ineq:power-bound}): statistical constraint due to limited samples in $\cY$. The second term 
shows that power is low whenever
we have \emph{both} of the following: the ratio $N_\ell/n$ is not too large, and the ratio $B_{\textnormal{train}}/|\cY|$ is close to 0. 
For the first quantity, a bound on $N_\ell/n$ means that we do not have access to ample labeled data---the available data
sampled from $P$ provides only
 $N_\ell/n$ many independent data sets of size $n$, so it is not easy to empirically estimate how often instability may occur.
For the second ratio, if $B_{\textnormal{train}}/|\cY|\approx 0$ then this means that we do not have sufficient computational budget
to allow an exhaustive search over $\cY$ (i.e., we cannot afford to test the behavior of $\cA$ on all possible values of $Y$)
to compensate for the limited amount of available labeled data.
\item Third term (the second part of the bound in \eqref{ineq:power-bound}): statistical constraint due to limited samples in $\cX$. The interpretation for this term is analogous to the previous term,
with the difference that we are given $N_\ell+N_u$ many draws of $X$ due to the additional unlabeled data set $\cD_u$.
\end{itemize}
In particular, the power can be only slightly higher than $\alpha$ (i.e., only slightly better than random),
unless \emph{all} of the three terms in the minimum are large: to have high power,
we would need $B_{\textnormal{train}}\gg n$
to enable sufficiently many calls to $\cA$, 
and moreover, to search over $\cY$ space we need either $N_\ell\gg n$ or $|\cY|$ to be relatively small,
and similarly to search over $\cX$ space we need either $N_\ell+N_u\gg n$ or $|\cX|$ to be relatively small.
Note the second and third terms are statistical limits, due to limited data (i.e., $N_\ell$ and $N_u$), but 
these statistical limits can be overcomed if $|\cX|$ and $|\cY|$ are small enough to permit exhaustive search (under the computation constraint $B_{\textnormal{train}}$). In practice, we might expect the algorithm to become more stable as $n$ increases, and we will be interested in testing whether the algorithm is $(\epsilon_n, \delta_n)$-stable with $\epsilon_n, \delta_n \to 0$ as $n \to \infty$. However, Theorem \ref{thm:limits_evaluate-random} suggests that as long as one of the three terms in \eqref{ineq:power-bound} is small, e.g., when $B_{\textnormal{train}} = O(n)$ or $N_\ell = O(n)$, then it is even impossible to tell whether $\epsilon_n, \delta_n$ are smaller than a constant, let alone tell they go to zero.

\paragraph{A nearly matching bound.} We will now show that the upper bound on the power given in Theorem \ref{thm:limits_evaluate-random} can be approximately achieved, in certain settings, with the simple test described in Section \ref{sec:def-black-box-test} based on sample-splitting. Recall that the binomial test in Section \ref{sec:def-black-box-test} is based on the empirical count
\begin{equation*}
	B = \sum_{k=1}^{ \lfloor \kappa \rfloor } \indi( \Delta^{(k)} > \epsilon ),
\end{equation*} where $\kappa$ and $\Delta^{(k)}$ are given in \eqref{def:kappa} and \eqref{eq:delta-k}, respectively. Then we can define our test by comparing $B$ again a Binomial$( \lfloor \kappa \rfloor, \delta)$ distribution:
\begin{equation}\label{eqn:binomial-test}
	\begin{split}
		\hT_{\textnormal{Binom}}(\cA,\cD_{\ell}, \cD_u) = \begin{cases} 
1, & B < k_*,\\
1, & B = k_* \textnormal{ and }\zeta\leq a_*,\\
0, & \textnormal{ otherwise},
\end{cases}
	\end{split}
\end{equation} for a random value $\zeta\sim\textnormal{Unif}[0,1]$,
where the nonnegative integer $k_*$ and value $a_*\in[0,1)$ are chosen as the unique values satisfying
\[
	\bbP( \textnormal{Binomial}( \textstyle{\lfloor \kappa \rfloor}, \delta  )  < k^* ) + a^*\cdot  \bbP( \textnormal{Binomial}( \textstyle{\lfloor \kappa \rfloor}, \delta  )  = k^* ) = \alpha.
\] Then a similar result as the Theorem 1 in \cite{kim2021black} can be established.
\begin{Proposition} \label{prop:test-power} The test $\hT_{\textnormal{Binom}}$ is a black-box test under the computational constraint,
and satisfies the assumption-free validity condition~\eqref{eqn:validity}. Moreover, if $\delta < 1 - \alpha^{1/\lfloor \kappa \rfloor }$, then the power of the test is given by
\begin{equation*}
	\bbP( \hT_{\textnormal{Binom}}(\cA,\cD_{\ell}, \cD_u)  = 1) = \alpha \cdot \left( \frac{1 - \delta^*_{\epsilon} }{1 - \delta} \right)^{ \Big\lfloor \frac{B_{\textnormal{train} }}{n} \wedge \frac{N_{\ell}}{n} \wedge \frac{N_{\ell} + N_u}{n}  \Big\rfloor }.
\end{equation*}
\end{Proposition}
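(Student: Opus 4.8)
The plan is to check the three assertions of the proposition in order, putting most of the work into pinning down the exact law of the test statistic $B$. \emph{Membership in the black-box class.} I would rewrite the sample-splitting procedure of Section~\ref{sec:def-black-box-test} as a sequence of rounds matching Definition~\ref{def:black-box-test}: for $k=1,\dots,\lfloor\kappa\rfloor$, one round sets $\cD_\ell^{(r)}$ to the $k$-th length-$n$ block cut deterministically out of $\cD_\ell$, with $\xi^{(r)}=\xi^{(k)}$ extracted from the seed $\zeta^{(r)}$, producing $\hf^{(k)}_n$; a paired round feeds the same block with its last point removed and reuses $\xi^{(k)}$, producing $\hf^{(k)}_{n-1}$. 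The test points $X_{\lfloor\kappa\rfloor n+1},\dots,X_{\lfloor\kappa\rfloor n+\lfloor\kappa\rfloor}$ are read off $\cD_\ell\cup\cD_u$. None of the generating maps $g^{(r)}$ inspects the internals of $\cA$; by the definition of $\kappa$ in~\eqref{def:kappa} the cumulative number of training points stays within $B_{\textnormal{train}}$ and the required labeled/unlabeled points exist, so the loop runs for exactly the intended rounds; and the terminal map $g$ forms the $\Delta^{(k)}$ from the stored fitted models, computes $B$, and applies the randomized decision~\eqref{eqn:binomial-test} using $\zeta$. Hence $\hT_{\textnormal{Binom}}$ qualifies as a black-box test with budget $B_{\textnormal{train}}$.

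\emph{Exact law of $B$.} This is the heart of the argument. Fix $k$. The triple $(\hf^{(k)}_n,\hf^{(k)}_{n-1},X_{\lfloor\kappa\rfloor n+k})$ is obtained from $n$ i.i.d.\ draws from $P$ by running $\cA$ on all $n$ of them and on the first $n-1$ of them with one shared $\textnormal{Unif}[0,1]$ seed, together with one further independent draw $X\sim P_X$; using the symmetry of $\cA$ (so that ``the first $n-1$ points'' has the same joint law as ``delete one point''), this is exactly the construction defining $\delta^*_\epsilon$ in~\eqref{def:delta_star}, so $\bbP\{\Delta^{(k)}>\epsilon\}=\delta^*_\epsilon$. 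Since the $\lfloor\kappa\rfloor$ triples use pairwise disjoint blocks of the i.i.d.\ sample, disjoint test points, and independent seeds $\xi^{(k)}$, the indicators $\indi(\Delta^{(k)}>\epsilon)$ are i.i.d.\ $\textnormal{Ber}(\delta^*_\epsilon)$, whence $B\sim\textnormal{Binomial}(\lfloor\kappa\rfloor,\delta^*_\epsilon)$ for every $(\cA,P,n)$.

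\emph{Validity and power.} With $(k_*,a_*)$ as in~\eqref{eqn:binomial-test}, the rejection probability equals $(1-a_*)\,\bbP\{B\le k_*-1\}+a_*\,\bbP\{B\le k_*\}$, a convex combination of binomial CDF values, each non-increasing in the success probability; under $H_0$ we have $\delta^*_\epsilon>\delta$, so this is at most its value at $\delta$, which is $\alpha$ by the defining equation for $(k_*,a_*)$, giving~\eqref{eqn:validity}. For the power, observe that $\delta<1-\alpha^{1/\lfloor\kappa\rfloor}$ is equivalent to $\bbP\{\textnormal{Binomial}(\lfloor\kappa\rfloor,\delta)=0\}=(1-\delta)^{\lfloor\kappa\rfloor}>\alpha$, which forces the quantile choice $k_*=0$ and then $a_*=\alpha/(1-\delta)^{\lfloor\kappa\rfloor}\in(0,1)$. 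Substituting $k_*=0$ and the law of $B$ from the previous step, $\bbP\{\hT_{\textnormal{Binom}}=1\}=a_*\,\bbP\{B=0\}=\frac{\alpha}{(1-\delta)^{\lfloor\kappa\rfloor}}(1-\delta^*_\epsilon)^{\lfloor\kappa\rfloor}=\alpha\big(\tfrac{1-\delta^*_\epsilon}{1-\delta}\big)^{\lfloor\kappa\rfloor}$, and since $\lfloor\kappa\rfloor=\lfloor\tfrac{B_{\textnormal{train}}}{n}\wedge\tfrac{N_\ell}{n}\wedge\tfrac{N_\ell+N_u}{n}\rfloor$, this is the claimed formula.

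The step I expect to require the most care is the exact distributional identity in the second paragraph: one must verify that the sample-split experiments genuinely reproduce the joint law behind Definition~\ref{def:stability}/\eqref{def:delta_star} — exploiting symmetry of $\cA$, the i.i.d.\ structure, the shared-seed coupling between $\hf^{(k)}_n$ and $\hf^{(k)}_{n-1}$, and the independence of a fresh test point — so that $B$ is honestly $\textnormal{Binomial}(\lfloor\kappa\rfloor,\delta^*_\epsilon)$ rather than only stochastically comparable to it. The remaining ingredients — the round and budget bookkeeping, the CDF monotonicity used for validity, and the collapse $k_*=0$ under the stated condition — are then routine.
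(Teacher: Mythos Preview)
Your proposal is correct and follows essentially the same route as the paper's proof: establish that $B\sim\textnormal{Binomial}(\lfloor\kappa\rfloor,\delta^*_\epsilon)$, use stochastic monotonicity of the binomial CDF in the success probability to get validity, and under $\delta<1-\alpha^{1/\lfloor\kappa\rfloor}$ collapse to $k_*=0$, $a_*=\alpha/(1-\delta)^{\lfloor\kappa\rfloor}$ to read off the exact power. You simply supply more detail than the paper on the black-box bookkeeping and on why the $\Delta^{(k)}$ are i.i.d.\ with the right marginal, which the paper states without elaboration.
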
 

Comparing this power calculation to the result of Theorem~\ref{thm:limits_evaluate-random},
we can see that in a setting where $B_{\textnormal{train} }$ is relatively small comparing with $|\cX|$ and $|\cY|$, the power of the simple test $\hT_{\textnormal{Binom}}$
is nearly as high as the optimal power that can be achieved by any black-box test.

\paragraph{An unlimited budget?}
While our result is stated for a black-box test run under a prespecified computational constraint, $B_{\textnormal{train}}$,
it is also natural to ask what power might be obtained if we do not specify a budget. In fact, our result applies
to this setting as well. Suppose that we require our black-box test $\hT$ to terminate at a finite time (i.e., $\hr$ must be finite),
but we do not specify a budget---that is, $\hr$ can be arbitrarily large, and $\sum_{r=1}^{\hr}|\cD^{(r)}_\ell|$ can be arbitrarily large.
In this setting, we can simply apply the results of Theorem~\ref{thm:limits_evaluate-random} while taking $B_{\textnormal{train}}\to\infty$.
In particular, 
\[\lim_{B_{\textnormal{train}}\to\infty}\left(1 - \frac{B_{\textnormal{train}}}{|\cY|}\wedge1\right) = \begin{cases} 1, & |\cY|=\infty, \\ 0, & |\cY|<\infty,\end{cases}\]
and the same is true for $|\cX|$. Therefore, even with an unlimited budget, if we require $\hT$ to return an answer in finite time then
Theorem~\ref{thm:limits_evaluate-random} implies the following bounds on power:
\[\bbP \{ \hT = 1 \} \leq \alpha  \left(\frac{1-\delta^*_\epsilon}{1-\delta}\right)^{\frac{N_\ell}{n}}, \textnormal{ if $|\cY|=\infty$; \quad
}\bbP \{ \hT = 1 \} \leq \alpha  \left(\frac{1-\delta^*_\epsilon}{1-\widetilde{\delta} }\right)^{\frac{N_\ell+N_u}{n}}, \textnormal{ if $|\cX|=\infty$}.\]

\paragraph{Comparison with existing work.} Our result is a strict generalization of 
\citet[Theorem 3]{kim2021black}	, which proves that power is bounded as
$
		\bbP \{ \hT = 1 \}  \leq \alpha \left( \frac{1 - \delta^*_{\epsilon} }{1 - \delta} \right)^{ \frac{N_{\ell}}{n} }
$ whenever $|\cY|=\infty$.\footnote{More precisely, their work assumes that the cardinality of the space $\cY$ is uncountably infinite,
but this is because they allow the test $\hT$ to run for countably infinitely many rounds. Since here we require $\hT$ to terminate after
finitely many rounds, under our definition of a black-box test, their result would hold also for a countably infinite $\cY$.}
Of course, this is a special case of our main result, obtained by taking $|\cY|=\infty$ and $B_{\textnormal{train}}\to\infty$.
The existing result in their work does not provide a bound on power when $|\cY|<\infty$ (for example,
their result does not apply in the common binary classification setting $\cX = \bbR^d$ and $\cY=\{0,1\}$), and does not
account for the role of the computational budget. 
The proof of our main result leverages some of the constructions developed in \cite{kim2021black}'s proof,
but because of the additional considerations on the size of data space as well as the computational constraint, the proof of Theorem \ref{thm:limits_evaluate-random} requires novel technical tools, as we will see in Section~\ref{sec:main-proof}.

\section{Discussion} \label{sec:discussion}
In this paper, we provide a unified power upper bound for any universally valid black-box test for testing algorithmic stability under computational constraints. We reveal the computational budget can be the dominant constraint when the budget $B_{\textnormal{train}}$ is small. At the same time, we illustrate that when the data space is finite, if the sample size is limited then certifying stability is possible only if the data spaces $\cX$ and $\cY$ are small enough to essentially permit an exhaustive search. 

As we have mentioned in Section \ref{sec:def-black-box-test} for the black-box test presented in Definition \ref{def:black-box-test}, while the training algorithm $\cA$ can only be studied empirically via calls to $\cA$ (that is, we cannot examine its theoretical properties, or, say, read the code that defines the implementation of $\cA$), in contrast the fitted model $\hf^{(r)}$ returned by $\cA$ at the $r$th round \emph{is} transparent to us---we know its analytic expression in constructing the final test. This is a less restrictive notion of black-box testing, giving more information to the analyst. This is called a black-box test with {\it transparent models} by \cite{kim2021black}. An even stronger constraint is the notion of a black-box test with {\it black-box models}, where the fitted models cannot be studied analytically and instead can only be studied via evaluating the models on finitely many test points. In Appendix \ref{app:black-box-test-black-box-model}, we also establish the power upper bound for black-box test with black-box models in testing algorithmic stability.

There are also a couple of interesting directions worth exploring in the future. First, as we mentioned in Section \ref{sec:main-results} that the power achieved by the simple binomial test is only tight when $B_{\textnormal{train}}$ is relatively small to $|\cX|$ and $|\cY|$, it is interesting to see whether we could develop procedures which can close the gap between the current upper and lower bounds on power in general scenarios. Second, our hardness result on testing algorithmic stability suggests that we should look for relaxation for solving this testing problem. One possible way would be to relax the notion of stability we consider. For example, in Section 5.2 of \cite{kim2021black}, the authors discussed it might be possible to test a weaker notion of data-conditional stability. The second possible way would be to relax the test we consider. It would be interesting to consider a partially black-box setting, where some part of the algorithm's mechanism is known and some part is unknown---for instance, this arises when we use a complex black-box algorithm for building features, but then run a simple regression to use these features for prediction. Finally, in this work, we focus on testing $(\epsilon, \delta)$-stability defined in Definition \ref{def:stability}; another popular stability notion in learning theory is the hypothesis stability proposed in \cite{bousquet2002stability}. The distinction between these two notions is that our notion of stability is defined based on the perturbation of prediction, whille hypothesis stability is defined based on the perturbation of loss. An interesting open question is whether there is a unified way to treat different notions of stability under the black-box testing framework.

\section{Proof of Theorem \ref{thm:limits_evaluate-random}} \label{sec:main-proof}
The theorem can be separated into three claims and we will prove each of these claims separately. Respectively, these three results reflect the bounds on power that are due to limited $Y$ data and $X$ data (statistical limits), and to a limited number of calls to $\cA$ (a computational limit). We will begin with the power upper bound due to limited $Y$ data.

\subsection{Statistical limit: bounding power due to limited $Y$ data} \label{sec:Y-space-pf}
The high-level idea of the proof is the following: Step 1, we show that for any $y \in \cY$, there is a event such that on that event $\{\cE_y \}$, the power is lower, i.e., $\bbP(\hT =1; \cE_y ) \leq  \alpha\left(\frac{1-\delta^*_\epsilon}{1-\delta}\right)^{\frac{N_\ell}{n}}$; Step 2, we show if the result in Step 1 is true, then we could integrate these events under the computational constraint and the ultimate power will also be low as well. 

The key lemma we need in Step 1 is the following. For any $y\in\cY$, define $\cE_y$ as the event that $y\not\in\cD^{(1)}_\ell \cup\dots\cup\cD^{(\hr)}_\ell$, when running the test $\hT$. Then
\begin{Lemma}\label{lem:Y}
In the setting of Theorem~\ref{thm:limits_evaluate-random},
for any $y\in\cY$,
$\bbP\{\hT = 1; \cE_y\} \leq \alpha\left(\frac{1-\delta^*_\epsilon}{1-\delta}\right)^{\frac{N_\ell}{n}}$.
\end{Lemma}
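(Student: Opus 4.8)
The plan is to bound $\bbP\{\hT = 1;\cE_y\}$ by comparing the run of $\hT$ on the stable instance $(\cA,P)$ with its run on a carefully constructed \emph{unstable} instance $(\cA^\dagger,P^\dagger)$ that differs from $(\cA,P)$ only in ways that are invisible on $\cE_y$, and then invoking assumption-free validity~\eqref{eqn:validity} for $(\cA^\dagger,P^\dagger)$. Since $\hT$ may be randomized, I would first condition on the seeds $\zeta,\zeta^{(1)},\zeta^{(2)},\dots$, so that it suffices to prove the bound for a deterministic $\hT$ and then average; I would also only track $m=\lfloor N_\ell/n\rfloor$ blocks of the labeled data, since $\bigl(\tfrac{1-\delta^*_{\epsilon}}{1-\delta}\bigr)^{N_\ell/n}\ge \bigl(\tfrac{1-\delta^*_{\epsilon}}{1-\delta}\bigr)^{m}$ (the ratio is $\ge 1$), so it is enough to prove the bound with exponent $m$.

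\emph{The alternative instance.} Define $\cA^\dagger$ to equal $\cA$ on every data set containing no point with label $y$, and on a data set $\{(x_i,y_i)\}_i$ containing such points to return the function $x\mapsto \bigl[\cA(\{(x_i,y_i)\}_{i:\,y_i\ne y};\xi)\bigr](x)+(1+\epsilon)\,|\{i:y_i=y\}|$; this is symmetric and has the key property that adding or removing a single point with label $y$ shifts every fitted value by exactly $1+\epsilon>\epsilon$. Let $P^\dagger_\rho=(1-\rho)P+\rho\,\delta_{(x_0,y)}$ for an arbitrary fixed $x_0\in\cX$. Because a size-$n$ sample from $P^\dagger_\rho$ has its last coordinate equal to $(x_0,y)$ with probability at least $\rho$, in which case the prediction perturbation of $\cA^\dagger$ is $1+\epsilon>\epsilon$, the triple $(\cA^\dagger,P^\dagger_\rho,n)$ is not $(\epsilon,\delta)$-stable as soon as $\rho>\delta$; hence $\bbP\{\hT(\cA^\dagger,\cD^\dagger_\ell,\cD^\dagger_u)=1\}\le\alpha$ when $\cD^\dagger_\ell,\cD^\dagger_u$ are drawn from $P^\dagger_\rho$. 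On the event $\cE_y$, the test feeds only $y$-free data sets to the oracle, where $\cA^\dagger\equiv\cA$, so a run of $\hT$ with oracle $\cA^\dagger$ and a run with oracle $\cA$ produce the same transcript whenever the underlying samples agree, which lets one couple the two runs and transfer the $\alpha$ bound.

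\emph{The main obstacle: the sharp constant.} The crude coupling---insisting that no contaminated point ever appears---pays a factor $(1-\rho)^{-N_\ell}\to(1-\delta)^{-N_\ell}$, which is vacuous, so one needs a change of measure with the \emph{per-block} rather than per-point rate. This is transparent when $n=1$: there are no intermediate data sets, the instability probability of $(\cA^\dagger,P^\dagger_\rho,1)$ equals $\rho+(1-\rho)\delta^*_{\epsilon}$, setting this to $\delta$ gives $1-\rho=\tfrac{1-\delta}{1-\delta^*_{\epsilon}}$, the likelihood ratio $\tfrac{d(\mathrm{law}_P)}{d(\mathrm{law}_{P^\dagger})}$ on $y$-free data is exactly $(1-\rho)^{-N_\ell}=\bigl(\tfrac{1-\delta^*_{\epsilon}}{1-\delta}\bigr)^{N_\ell}$, and pushing $\bbP\{\hT(\cA^\dagger)=1\}\le\alpha$ through this ratio yields the claim directly. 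For $n\ge 2$ the difficulty is that partially contaminating a size-$n$ training set produces data sets on which $\cA$'s own, uncontrolled perturbations reappear, so the instability probability of $(\cA^\dagger,P^\dagger_\rho,n)$ is no longer a clean function of $\rho$ and the naive tuning of $\rho$ does not deliver the exponent $N_\ell/n$. I expect overcoming this to be the bulk of the work, and the place where tools beyond \cite{kim2021black} are needed: one must show that on $\cE_y$ the transcript depends on $\cD_\ell$ only through at most $m=\lfloor N_\ell/n\rfloor$ ``independent size-$n$ stability trials,'' and then run a sequential likelihood-ratio / optional-stopping argument along the rounds $r=1,\dots,\hr$ that multiplies the controlling ratio by at most $\tfrac{1-\delta^*_{\epsilon}}{1-\delta}$ each time a genuinely new size-$n$ block is consumed by a call to $\cA$ and leaves it unchanged otherwise, capping the number of such multiplications at $m$.
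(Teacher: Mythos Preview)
Your overall strategy matches the paper's: construct an unstable $(\cA',P')$ that agrees with $(\cA,P)$ on $\cE_y$, then combine validity with a change of measure. The gap is in how you engineer instability. Your $\cA^\dagger$ guarantees a large perturbation only when the \emph{removed} point $(X_n,Y_n)$ has label $y$, so you are forced to take contamination rate $\rho>\delta$, and then the per-point change of measure $(1-\rho)^{-N_\ell}$ is indeed useless. You propose to repair this with a per-block optional-stopping argument, but that cannot work as stated: the black-box test may use $\cD_\ell$ in completely arbitrary ways (resampling, simulation, individual points), so there is no sense in which ``the transcript depends on $\cD_\ell$ only through at most $\lfloor N_\ell/n\rfloor$ independent size-$n$ trials,'' and there is no martingale along rounds $r$ whose increments are tied to consumption of data blocks---the likelihood ratio lives on the input data, not on the calls to $\cA$.

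The paper's actual idea makes the crude change of measure already sharp, with no block bookkeeping. Redefine $\cA'$ so that it is unstable whenever $y$ appears \emph{anywhere} among the $n$ training points: if $|\cD|=n$ and $y\in\cD$, return $\cA_1(\cD;\xi)(x)=1+\epsilon+\max_{\sigma}\cA((\cD_\sigma)_{-n};\xi)(x)$, otherwise return $\cA(\cD;\xi)$. Under $P'=c\,(P_X\times\delta_y)+(1-c)\,P$, at least one of the $n$ training points comes from the contaminated component with probability $1-(1-c)^n$, and instability is then guaranteed; on the complement (all points from $P$) instability still occurs with probability at least $\delta^*_\epsilon$. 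Hence $(\cA',P',n)$ is not $(\epsilon,\delta)$-stable as soon as $(1-c)^n<\tfrac{1-\delta}{1-\delta^*_\epsilon}$, so $c$ may be taken of order $1/n$. The naive per-point likelihood ratio $(1-c)^{-N_\ell}$ then already equals $\bigl(\tfrac{1-\delta^*_\epsilon}{1-\delta}\bigr)^{N_\ell/n}$ in the limit $c\downarrow 1-\bigl(\tfrac{1-\delta}{1-\delta^*_\epsilon}\bigr)^{1/n}$. The factor $1/n$ in the exponent comes from amplifying a single contaminated point into a guaranteed size-$n$ instability event, not from any block structure on the measure-change side. A minor additional point: contaminate with $P_X\times\delta_y$ rather than $\delta_{(x_0,y)}$, so that the $X$-marginal is unchanged and the unlabeled data $\cD_u$ contributes nothing to the likelihood ratio.
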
 The proof of Lemma \ref{lem:Y} is postponed to appendix, but the proof idea of Lemma \ref{lem:Y} is given as follows: we will construct a corrupted distribution $P'$, and a modified algorithm $\mathcal{A}'$ so that $\mathcal{A}'$ is not stable when the input data consists of $y$. Then  we could show under proper conditions on $(\cA',P')$, $(\mathcal{A}', P', n)$ is not stable, and it is difficult to distinguish between $\mathcal{A}', P'$ and $\mathcal{A},P$ given the event $\cE_y$ happens. Finally, the low power of test on $(\mathcal{A}, P, n)$ comes from the validity of the test when it examines the unstable tuple $(\mathcal{A}',P',n)$.

Then we can integrate the result in Lemma \ref{lem:Y} on different events $\cE_y$ and get the bound. Without loss of generality, we can assume $|\cY|> B_{\textnormal{train}}$,
since otherwise the bound is trivial. 
Fix any integer $M$ with $B_{\textnormal{train}}<M\leq |\cY|$  (note that $|\cY|$ may be finite or infinite),
and let $y_1,\dots,y_M\in\cY$ be distinct.
We then have \begin{multline*}
\sum_{i=1}^M \bbP\{\hT = 1; \cE_{y_i}\}
= \bbE\left[\sum_{i=1}^M \indi\{\hT = 1; \cE_{y_i}\}\right]
= \bbE\left[\indi\{\hT=1\} \cdot \sum_{i=1}^M \indi\{\cE_{y_i}\}\right]\\
\geq \bbE\left[\indi\{\hT=1\} \cdot (M - B_{ \textnormal{train} })\right] = \bbP\{\hT=1\} \cdot  (M - B_{ \textnormal{train} }),
\end{multline*}
where the inequality holds since, due to the computational constraint,
$y\in\cD^{(1)}_\ell \cup\dots\cup\cD^{(\hr)}_\ell$ can hold for at most $B_{\textnormal{train}}$ many values $y\in\cY$---and therefore,
at least $M-B_{\textnormal{train}}$ many of the events $\cE_{y_1},\dots,\cE_{y_M}$ must hold.
Therefore,
\[\bbP\{\hT=1\}  \leq \frac{\sum_{i=1}^M \bbP\{\hT = 1; \cE_{y_i}\}}{M - B_{ \textnormal{train} }} 
\leq\frac{M\cdot  \alpha\left(\frac{1-\delta^*_\epsilon}{1-\delta}\right)^{N_\ell/n}}{M - B_{ \textnormal{train} }} ,\]
where the last step applies Lemma~\ref{lem:Y}.
Since this holds for any $M$ with $B_{\textnormal{train}}<M\leq |\cY|$, we have
\[\bbP\{\hT=1\}  \leq \alpha\left(\frac{1-\delta^*_\epsilon}{1-\delta}\right)^{N_\ell/n}\cdot\inf_{M \, : \, B_{\textnormal{train}}<M\leq |\cY|}
\frac{M}{M- B_{ \textnormal{train} }} .\]
If $|\cY|<\infty$, then $\inf_{M \, : \, B_{\textnormal{train}}<M\leq |\cY|}
\frac{M}{M- B_{ \textnormal{train} }} = \frac{|\cY|}{|\cY| -  B_{ \textnormal{train}}} $.
If instead $|\cY| = \infty$, then $\inf_{M \, : \, B_{\textnormal{train}}<M\leq |\cY|}
\frac{M}{M- B_{ \textnormal{train} }} =1 $. For both cases, then, we have obtained the desired result.

\subsection{Statistical limit: bounding power due to limited $X$ data} \label{sec:X-space-bound-pf}
Analogously to our proof for limited $Y$ data,  for any $x\in\cX$ we define $\cE_x$ as the event that $x\not\in\cD^{(1)}_\ell \cup\dots\cup\cD^{(\hr)}_\ell$, when running the test $\hT$.
Again, the key step is to show the following result on finding many events so such the power is lower on those events:
\begin{Lemma}\label{lem:X}
In the setting of Theorem~\ref{thm:limits_evaluate-random}, if $\delta (1 + \frac{1}{en}) < 1$, then
for any $x\in\cX$, $\bbP\{\hT = 1; \cE_x\} \leq \alpha\left(\frac{1-\delta^*_\epsilon}{1-\delta(1+\frac{1}{en})}\right)^{\frac{N_\ell+N_u}{n}}.$
\end{Lemma}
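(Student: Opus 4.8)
The plan is to mirror the argument sketched for Lemma~\ref{lem:Y}, but with the corruption applied to the $X$-marginal rather than the $Y$-marginal, and paying attention to the extra slack in $\delta$ that appears because we can only control the conditional distribution of $Y$ given $X$, not $X$ itself. Fix $x \in \cX$ and condition on the event $\cE_x$ that the value $x$ never appears in any training set $\cD^{(r)}_\ell$ used by the test. On this event, the test has never seen how $\cA$ behaves on a data set containing the feature value $x$, so its output cannot distinguish the true pair $(\cA,P)$ from a modified pair $(\cA',P')$ that agrees with $(\cA,P)$ on all data sets avoiding $x$ but is engineered to be unstable whenever $x$ is present. Concretely, I would let $P'$ be a ``corrupted'' distribution formed by mixing a small amount of mass at the feature value $x$ into $P$ (e.g.\ $P' = (1-\eta)P + \eta \cdot \delta_{x} \otimes (\text{some }Y\text{-distribution})$ for an appropriately chosen small $\eta$), and define $\cA'$ to equal $\cA$ except that on any input data set containing a point with feature $x$, it outputs a fitted function whose value at a fresh test point with feature $x$ jumps by more than $\epsilon$ when that point is removed. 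Since $\cA'$ and $\cA$ produce identical transcripts on the event $\cE_x$ — which has the same probability under $P$ and, up to the $\eta$-perturbation, under $P'$ — the validity guarantee~\eqref{eqn:validity} applied to the unstable tuple $(\cA',P',n)$ forces $\bbP_{P'}\{\hT=1; \cE_x\} \le \alpha$, and then a change-of-measure/likelihood-ratio step transfers this to a bound under $P$.

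The change-of-measure step is where the factors in the statement come from. Each independent data point (labeled or unlabeled) is drawn i.i.d.\ from $P$ (resp.\ $P_X$) under the true model, and from $P'$ (resp.\ $P'_X$) under the corrupted model; on $\cE_x$ the Radon–Nikodym derivative of the transcript law is a product of per-sample ratios, and since there are $N_\ell$ labeled and $N_u$ unlabeled draws, the relevant exponent is $(N_\ell+N_u)/n$ after accounting for the fact that each ``copy'' of the stability experiment consumes $n$ points. The base of the exponent, $(1-\delta^*_\epsilon)/(1-\delta(1+\tfrac{1}{en}))$, arises because: (i) the ``budget'' of how unstable we are allowed to make $\cA'$ while keeping $(\cA',P',n)$ just barely unstable is governed by $\delta^*_\epsilon$ versus $\delta$; and (ii) the factor $1+\tfrac{1}{en}$ is the price of the mixing — when we put mass $\eta$ at $x$, a training set of size $n$ fails to contain $x$ with probability $(1-\eta)^n$, and optimizing the tradeoff between ``$\eta$ large enough that $(\cA',P',n)$ is genuinely $\delta$-far from stable'' and ``$\eta$ small enough that $\cE_x$ is not too rare'' yields the $1/(en)$ correction (the $e$ coming from $(1-1/n)^n \approx e^{-1}$ or a similar optimization, hence the hypothesis $\delta(1+\tfrac{1}{en})<1$ to keep the denominator positive).

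The order of steps I would follow: (1) define $(\cA',P')$ precisely, check that $\cA'$ is symmetric and that $(\cA',P',n)$ is not $(\epsilon,\delta)$-stable — this requires verifying that the perturbation at $x$ occurs with probability exceeding $\delta$, which is exactly where the $\eta$-vs-$\delta(1+\tfrac1{en})$ bookkeeping enters; (2) observe that, conditionally on $\cE_x$, the joint law of all quantities the test sees (input data, resampled data sets, fitted models, seeds) is the restriction to $\cE_x$ of a measure that is absolutely continuous with respect to its counterpart under $(\cA',P')$, with an explicit likelihood ratio; (3) apply validity of $\hT$ against $(\cA',P',n)$ to get $\bbP_{P'}\{\hT=1;\cE_x\}\le\alpha$; (4) push this bound to $P$ via the likelihood ratio, collecting the product over the $N_\ell+N_u$ i.i.d.\ draws into the exponent $(N_\ell+N_u)/n$. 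The main obstacle I anticipate is step (1)–(2) done jointly: one must choose the corruption parameter $\eta$ and the modified algorithm $\cA'$ so that simultaneously (a) $(\cA',P',n)$ is provably unstable at level $\delta$, (b) the likelihood ratio on $\cE_x$ is bounded by the clean expression $\left(\tfrac{1-\delta^*_\epsilon}{1-\delta(1+1/(en))}\right)^{(N_\ell+N_u)/n}$, and (c) $\cA'$ still agrees with $\cA$ on the complement of the ``$x$ is present'' event so that the test genuinely cannot tell them apart — reconciling these three constraints with a single choice of $(\eta,\cA')$ is the delicate combinatorial/optimization heart of the argument, and is presumably why this is deferred to the appendix rather than included inline.
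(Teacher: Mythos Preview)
Your high-level strategy---corrupt $P$ to $P'$ by mixing in mass at the feature value $x$, modify $\cA$ to $\cA'$ so that it is unstable whenever $x$ appears in the training set, then use indistinguishability on $\cE_x$ plus a change of measure and the validity condition---is exactly the paper's approach. However, two of your explanations are off in ways that would derail the computation if carried through literally.

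First, your description of $\cA'$ says it ``outputs a fitted function whose value at a fresh test point \emph{with feature $x$} jumps by more than $\epsilon$.'' If the jump occurs only at the specific test point $x$, then the event $\{|\hf_n(X_{n+1})-\hf_{n-1}(X_{n+1})|>\epsilon\}$ would additionally require $X_{n+1}=x$, and you would not get the clean lower bound $[1-(1-c)^n]+(1-c)^{n+1}\delta^*_\epsilon$ on the instability probability. The paper's $\cA'$ instead returns $\cA_1(\cD;\xi)(x')=1+\epsilon+\max_{\sigma}\cA((\cD_\sigma)_{-|\cD|};\xi)(x')$ at \emph{every} $x'$, so that whenever $x\in\cD_n$ (probability $1-(1-c)^n$ under $P'$), instability is guaranteed regardless of the test point.

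Second, your account of the $\tfrac{1}{en}$ slack attributes it to ``optimizing the tradeoff'' between the size of $\eta$ and the rarity of $\cE_x$. That same tradeoff is present in Lemma~\ref{lem:Y} and produces no such slack there. The actual source is that, unlike in Lemma~\ref{lem:Y}, the test point $X_{n+1}$ is drawn from $P'_X\neq P_X$: conditional on $x\notin\cD_n$, the instability probability under $(\cA,P)$ is $\delta^*_\epsilon$ only after \emph{also} conditioning on $X_{n+1}$ coming from the $P$-component of the mixture, which costs an extra factor $(1-c)$. The discrepancy $(1-c)^n\delta^*_\epsilon-(1-c)^{n+1}\delta^*_\epsilon$ is then bounded by $\sup_{t\in[0,1]}(t^n-t^{n+1})\cdot\delta^*_\epsilon\leq\tfrac{\delta}{en}$, which is where the $1+\tfrac{1}{en}$ enters. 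Relatedly, the exponent $(N_\ell+N_u)/n$ does not come from ``each copy of the stability experiment consuming $n$ points''; it comes directly from the likelihood-ratio bound $(1-c)^{-(N_\ell+N_u)}$ combined with the threshold $(1-c)^n<\tfrac{1-\delta(1+1/(en))}{1-\delta^*_\epsilon}$ needed to make $(\cA',P',n)$ unstable, after writing $(1-c)^{-(N_\ell+N_u)}=[(1-c)^n]^{-(N_\ell+N_u)/n}$ and taking the infimum over admissible $c$.
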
 The proof of Lemma \ref{lem:X} is similar to Lemma \ref{lem:Y} and relies on a construction of $\cA'$ such that it becomes unstable when the training data consists of $x$. The details are provided in the appendix. 
With this lemma in place, the proof for the power upper bound due to limited data in $\cX$ space
is exactly the same as for the previous step (i.e., with $Y$ in place of $X$), so we omit the details.

\subsection{Computational limit: bounding power due to limited calls to $\cA$} \label{sec:Btrain-limit-proof}
In this section, we prove the power upper bound due to a limited number of calls to $\cA$. Similar to the proof in Section \ref{sec:Y-space-pf}, we again aim to find many events so that the power on those events is low. But the distinct from the previous proof, here we look for rare event based on the random seed of the algorithm. 

Recall that when we run the test $\hT$, the calls $r=1,\dots,\hr$ to the algorithm $\cA$
are carried out with data set $\cD^{(r)}_\ell$ and with random seed $\xi^{(r)}$ (which may be generated in any fashion).
Fixing any subset $R\subset[0,1]$, define $\cE_R$ as the event that $\xi^{(r)}\not\in R$ for all $r$ such that $|\cD^{(r)}_\ell|=n$---that is, for all calls $r$ to the algorithm
during which the training set size is equal to $n$.
Define also the function $f(\xi) = \bbP\left( | \cA(\cD_n;\xi)(X_{n+1}) - \cA(\cD_{n-1};\xi)(X_{n+1}) | > \epsilon \mid \xi \right)$, 
the conditional probability of an unstable prediction, as a function of the random seed $\xi$ (here the probability is taken with respect
to training and test data sampled i.i.d.\ from $P$). By definition, we have $\bbE[f(\xi)] = \delta^*_\epsilon$, where the expected
value is taken with respect to $\xi\sim\textnormal{Unif}[0,1]$. Then we can show a similar result as in Lemmas \ref{lem:Y} and \ref{lem:X}, with some mild assumptions on $R$, the power of $\hT$ given $\cE_R$ happens is low.
\begin{Lemma}\label{lem:Btrain}
In the setting of Theorem~\ref{thm:limits_evaluate-random},
for any measurable set $R\subset[0,1]$, if $\textnormal{ Leb}(R) + (1-\textnormal{Leb}(R)) \cdot \bbE[f(\xi)\mid \xi\not\in R] >\delta$,
then
$\bbP\{\hT = 1; \cE_R\} \leq \alpha$, where $\textnormal{Leb}(R)$ denote the Lebesgue measure of $R$.
\end{Lemma}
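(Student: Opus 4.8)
The plan is to follow the same template as the proofs of Lemmas~\ref{lem:Y} and \ref{lem:X}: build a ``corrupted'' algorithm $\cA'$ that (i) is provably \emph{not} $(\epsilon,\delta)$-stable with respect to $P$ and training sets of size $n$, and yet (ii) is indistinguishable from $\cA$ from the viewpoint of the test $\hT$ whenever the event $\cE_R$ occurs. Then the assumption-free validity condition~\eqref{eqn:validity}, applied to the unstable tuple $(\cA',P,n)$, forces $\bbP\{\hT(\cA',\cD_\ell,\cD_u)=1\}\le\alpha$, and step (ii) transfers this bound to $\bbP\{\hT(\cA,\cD_\ell,\cD_u)=1;\cE_R\}$. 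In contrast to the $\cX$- and $\cY$-space arguments, where the ``trigger'' for instability is a data value and one must also corrupt the distribution to some $P'$, here the trigger is the random seed, which carries its own fixed $\textnormal{Unif}[0,1]$ law; so $P$ is left untouched and the coupling between the two executions will be an exact sample-path identity rather than a merely distributional one.

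\textbf{Construction and instability of $\cA'$.} For a data set $\cD$ and seed $\xi$, I would set $\cA'(\cD;\xi)=\cA(\cD;\xi)$ \emph{unless} $|\cD|=n$ and $\xi\in R$, in which case set
\[
\cA'(\cD;\xi)(x)\;=\;\max_{\cD'\subseteq\cD,\;|\cD'|=n-1}\;\cA(\cD';\xi)(x)\;+\;\epsilon+1 .
\]
The maximum is over the $n$ subsets of $\cD$ of size $n-1$, so it is finite and symmetric in $\cD$, and measurability is inherited from $\cA$; in particular $\cA'$ is again a legitimate symmetric algorithm. To check that $(\cA',P,n)$ is not $(\epsilon,\delta)$-stable, draw $(X_i,Y_i)\iidsim P$ and $\xi\sim\textnormal{Unif}[0,1]$, and let $\hf_n=\cA'(\{(X_i,Y_i)\}_{i\in[n]};\xi)$, $\hf_{n-1}=\cA'(\{(X_i,Y_i)\}_{i\in[n-1]};\xi)$. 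If $\xi\notin R$ both equal the corresponding outputs of $\cA$, so the conditional probability of an unstable prediction is exactly $f(\xi)$. If $\xi\in R$ then $\hf_{n-1}$ is untouched (its input has size $n-1\ne n$), while the input of $\hf_n$ has size $n$ and $\{(X_i,Y_i)\}_{i\in[n-1]}$ is one of the size-$(n-1)$ subsets in the maximum, so $\hf_n(x)\ge\hf_{n-1}(x)+\epsilon+1$ for every $x$, making the prediction unstable with conditional probability $1$. Integrating over $\xi$ gives $\delta^*_\epsilon(\cA',P,n)=\textnormal{Leb}(R)+(1-\textnormal{Leb}(R))\,\bbE[f(\xi)\mid\xi\notin R]$, which by the hypothesis of the lemma exceeds $\delta$; hence $(\cA',P,n)$ is not $(\epsilon,\delta)$-stable and \eqref{eqn:validity} yields $\bbP\{\hT(\cA',\cD_\ell,\cD_u)=1\}\le\alpha$.

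\textbf{Transfer via coupling.} I would run $\hT$ with oracle $\cA$ and with oracle $\cA'$ on a common probability space, sharing the data $\cD_\ell,\cD_u$ and all of the test's internal seeds $\zeta,\zeta^{(1)},\zeta^{(2)},\dots$, so that the round-$r$ quantities $(\cD^{(r)}_\ell,\xi^{(r)})$ are the same deterministic function of the common randomness and the history. An induction on rounds then shows that the two executions agree up to the first round $r^\star$ (if any) at which $\hT$ calls the oracle with $|\cD^{(r^\star)}_\ell|=n$ and $\xi^{(r^\star)}\in R$: at every earlier round the call either has training-set size $\ne n$ or has seed $\notin R$, and on all such calls $\cA'=\cA$ by construction. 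Consequently $\cE_R$ holds in the $\cA$-execution iff no such $r^\star$ exists iff $\cE_R$ holds in the $\cA'$-execution, and on this common event the two executions coincide entirely, so $\hT(\cA,\cdot)=\hT(\cA',\cdot)$ there. Therefore $\bbP\{\hT(\cA,\cdot)=1;\cE_R\}=\bbP\{\hT(\cA',\cdot)=1;\cE_R\}\le\bbP\{\hT(\cA',\cdot)=1\}\le\alpha$.

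\textbf{Main obstacle.} The one genuinely new point relative to \cite{kim2021black} — and the step I expect to require the most care — is engineering $\cA'$ so that \emph{all} of the following hold simultaneously: it differs from $\cA$ only on the precise set of calls that $\cE_R$ excludes (size-$n$ calls with seed in $R$), it is nevertheless unstable with instability probability \emph{exactly} equal to the threshold in the hypothesis, and it remains a bona fide symmetric, measurable algorithm. A naive additive corruption fails here because $\cA$'s leave-one-out prediction gap can be unbounded, which is why the maximum over leave-one-out subsets shifted by $\epsilon+1$ is needed. The trivial edge cases $\textnormal{Leb}(R)\in\{0,1\}$ and ``$\hT$ makes no size-$n$ call'' should be checked separately, but in those cases the hypothesis is vacuous or the bound is immediate.
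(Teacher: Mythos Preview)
Your proposal is correct and is essentially the paper's own proof: the modified algorithm $\cA'$ you define coincides (up to notation---the paper writes the leave-one-out maximum as $\max_{\sigma\in S_{|\cD|}}\cA((\cD_\sigma)_{-|\cD|};\xi)$) with the paper's $\cA_1$-based construction, and the instability calculation and indistinguishability-on-$\cE_R$ argument are the same. Your write-up is in fact more careful than the paper's on the coupling step (the induction on rounds) and on edge cases.
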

The proof of Lemma \ref{lem:Btrain} is provided in the appendix. With this lemma in place, we are ready to bound the power.
Fixing any integer $M\geq 1$, standard measure-theoretic arguments (e.g., Lemma 2.5 of Chapter 2 in \cite{bennett1988interpolation}) allow us to construct a partition $[0,1] = S_1\cup\dots\cup S_M$,
where $S_i$ is measurable and has Lebesgue measure $1/M$, and $\bbE[f(\xi)\mid \xi\in S_i] =\bbE[f(\xi)]= \delta^*_\epsilon$, for each $i\in[M]$.

Next, fix any integer $M_0\leq M$ with $M_0 >  M \cdot \frac{\delta - \delta_\epsilon^*}{1- \delta_\epsilon^*} $. For any subset $I\subset [M]$ with $|I| = M_0$, define a region $R_I = \cup_{i\in I} S_i$,
and observe that  $\textnormal{Leb}(R_I) = M_0/M > \frac{\delta - \delta_\epsilon^*}{1- \delta_\epsilon^*} $ by construction, and $\bbE[f(\xi)\mid \xi\not\in R_I] = \delta^*_\epsilon$ by construction of the partition $S_1\cup\dots\cup S_M$. 
Then by Lemma~\ref{lem:Btrain},
$\bbP\{\hT = 1; \cE_{R_I}\} \leq \alpha$.
Now we sum over all subsets $I$:
\[\sum_{I\subset[M], |I|=M_0} \bbP\{\hT = 1; \cE_{R_I}\} 
= \bbE\left[\indi\{\hT=1\}\cdot \!\!\!\!\sum_{I\subset[M], |I|=M_0}\!\!\!\! \indi\{\cE_{R_I}\} \right]
\geq
 \bbP\{\hT=1\}\cdot {M - \lfloor\frac{B_{\textnormal{train}}}{n}\rfloor  \choose M_0},
\]
where the inequality holds as follows: defining $\widehat{I}\subset[M]$ as the 
set of all indices $i\in[M]$ such that $\{ |\cD^{(r)}_\ell|=n$ and $\xi^{(r)}\in S_i\}$ holds for some $r\in\{1,\dots,\hr\}$,
we have $ \sum_{I\subset[M], |I|=M_0} \indi\{\cE_{R_I}\} = {M-|\widehat{I}|\choose M_0}$. And, due to the computational constraint, 
there are at most $\lfloor \frac{B_{\textnormal{train}}}{n}\rfloor$ many rounds $r$ for which $|\cD^{(r)}_\ell|=n$ and so we must have $|\widehat{I}|\leq  \lfloor \frac{B_{\textnormal{train}}}{n}\rfloor $.
Combining all our calculations, then,
\[\bbP\{\hT=1\} \leq  \frac{ \sum_{I\subset[M], |I|=M_0} \bbP\{\hT = 1; \cE_{R_I}\} }{{M - \lfloor \frac{B_{\textnormal{train}}}{n}\rfloor  \choose M_0}}
\leq\frac{{M\choose M_0}\cdot \alpha}{{M - \lfloor \frac{B_{\textnormal{train}}}{n}\rfloor  \choose M_0}}.\]
Finally, taking $M\to \infty$ (and setting $M_0 = 1 + \lfloor  M \cdot \frac{\delta - \delta_\epsilon^*}{1- \delta_\epsilon^*} \rfloor \approx  M \cdot \frac{\delta - \delta_\epsilon^*}{1- \delta_\epsilon^*} $), 
we have $\lim_{M\to\infty} \frac{{M\choose M_0}}{{M - \lfloor \frac{B_{\textnormal{train}}}{n}\rfloor  \choose M_0}} = \left(\frac{1-\delta^*_\epsilon}{1-\delta}\right)^{\lfloor \frac{B_{\textnormal{train}}}{n}\rfloor}$,
which proves that
$\bbP\{\hT=1\} \leq \alpha \left(\frac{1-\delta^*_\epsilon}{1-\delta}\right)^{ \frac{B_{\textnormal{train}}}{n}}$.

\subsection*{Acknowledgements}
R.F.B.\ was supported by the Office of Naval Research via grant N00014-20-1-2337
and by the National Science Foundation via grant DMS-2023109. 

\bibliographystyle{apalike}
\bibliography{reference.bib}

\appendix
\section{Extension of black-box test for algorithmic stability with black-box models} \label{app:black-box-test-black-box-model}
In this section, we turn to a different setting, where both the algorithm $\cA$ and its outputs (the fitted models $\hf$)
are treated as black-box. This setting can also be useful in some scenarios. Consider an algorithm $\cA$ returning a fitted model $\hf$
that is given by neural network with many layers, then our ability to study the behavior of this complex $\hf$ analytically is likely to be
limited---for example, it is known that computing the Lipschitz constant of a deep neural network is hard \citep{virmaux2018lipschitz}.
In many settings we may only be able to study properties of the fitted model $\hf$  empirically,
 i.e., by sampling a large number of values of $X$ and evaluating $\hf$ on these values.
(In contrast, if $\cA$ is a complex variable selection procedure but returns only simple models $\hf$, such as linear models,
then we would be able to study $\hf$ analytically---the transparent-model setting that we have considered up until now, in this paper.)

To allow for the setting of black-box models, we now modify our definition of a black-box test, as follows.
To help compare to Definition~\ref{def:black-box-test}, we use boldface text to highlight the parts of the definition that have been modified.
\begin{Definition}[Black-box test \textbf{with black-box models}, under computational constraints] \label{def:black-box-test-black-box-model} 
Consider any function $\hT$ that takes as input an algorithm $\cA$, a labeled data set $\cD_{\ell}\in \mathfrak{D}_{\ell}$, and an unlabeled data set $\cD_u \in \mathfrak{D}_u$,
and returns a (possibly randomized) output $\hT(\cA,\cD_{\ell}, \cD_u)\in\{0,1\}$. 
We say that $\hT$ is a black-box test, with computational budgets $B_{\textnormal{train}} >0$ for  model training \textbf{and $\boldsymbol{B_{\textnormal{eval}} > 0}$ for model evaluation}, if it can be defined as follows:  
 for some functions $g^{(1)},g^{(2)},\dots$ and $g$, and for random seeds, $\zeta^{(1)},\zeta^{(2)},\dots,\zeta\iidsim  \textnormal{Unif}[0,1]$,
 		 \begin{enumerate}[leftmargin=*]
	 	\item For each stage $r = 1,2,3,\ldots$, 
	 	\begin{enumerate}
	 	\item If $r = 1$, generate a new labeled data set $\cD_{\ell}^{(1)}$, \textbf{an unlabeled data set} $\boldsymbol{\cX^{(1)}}$, and a new random seed $\xi^{(1)}$,
		\[(\cD_{\ell}^{(1)}, \boldsymbol{\cX^{(1)}},\xi^{(1)} ) = g^{(1)} (\cD_{\ell}, \cD_u, \zeta^{(1)} ).\]
		 \begin{itemize}
		 	\item If some stopping criterion is reached or $|\cD^{(1)}_{\ell}| > B_{\textnormal{train}}$ \textbf{or $\boldsymbol{|\cX^{(1)}| > B_{\textnormal{eval}}}$}, exit the for loop; otherwise compute the fitted model $\hf^{(1)} = \cA( \cD_{\ell}^{(1)}; \xi^{(1)} )$, \textbf{and evaluate the fitted model by computing $\boldsymbol{\hf^{(1)}(\cX^{(1)})}$.}\footnote{Here $\hf^{(1)}(\cX^{(1)})$ is understood in the pointwise sense, i.e., we compute $\hf^{(1)}(x)$ for each $x\in\cX^{(1)}$.}
		 \end{itemize}
	 	\item If $r \geq 2$, generate a labeled data set $\cD_{\ell}^{(r)}$, \textbf{an unlabeled data set} $\boldsymbol{\cX^{(r)}}$, and a new random seed $\xi^{(r)}$,
		\begin{multline*}(\cD_{\ell}^{(r)},\boldsymbol{\cX^{(r)}},\xi^{(r)} )\\ = g^{(r)} \big(\cD_{\ell}, \cD_u, (\cD_{\ell}^{(s)})_{1\leq s<r}, \boldsymbol{(\cX^{(s)})_{1\leq s<r}, (\hf^{(s)}(\cX^{(s)}))_{1\leq s<r}}, \\(\zeta^{(s)})_{1\leq s\leq r} , (\xi^{(s)})_{1\leq s<r}\big).\end{multline*}
		\begin{itemize}
			\item If some stopping criterion is reached or $\sum_{j = 1}^{r} |\cD_{\ell}^{(j)}| > B_{\textnormal{train}}$ \textbf{or $\boldsymbol{\sum_{j = 1}^{r} |\cX^{(j)}| > B_{\textnormal{eval}}}$}, exit the for loop; otherwise compute the fitted model $\hf^{(r)} = \cA( \cD_{\ell}^{(r)}; \xi^{(r)} )$, \textbf{and evaluate the fitted model by computing $\boldsymbol{\hf^{(r)}(\cX^{(r)})}$}.
		\end{itemize}
	 	\end{enumerate}
		\item Let $\hr$ denote the total number of rounds when stopping.
	 	\item Finally, return
		\[\hT(\cA,\cD_{\ell}, \cD_u) = g\big(\cD_{\ell}, \cD_u,(\cD_{\ell}^{(r)})_{1\leq r\leq \hr},\boldsymbol{(\cX^{(r)})_{1\leq r\leq \hr}},\boldsymbol{(\hf^{(r)}(\cX^{(r)}))_{1\leq r\leq \hr}},(\zeta^{(r)})_{1\leq r\leq \hr},(\xi^{(r)})_{1\leq r\leq \hr},\zeta\big).\]
		\end{enumerate}
\end{Definition}

A pictorial illustration of a black-box test with black-box models is given in Figure \ref{fig:black-box-alg-black-box-model}. The main result for the power upper bound of testing algorithmic stability via black-box test with black-box models is given as follows.
\begin{Theorem} \label{thm:limits_evaluate-black-box-model} Fix any $\epsilon \geq 0$, $\delta \in [0,1 )$. Let $\hT$ be any black-box test with black-box models, as in Definition~\ref{def:black-box-test-black-box-model}. Suppose $\hT$ has computational budgets $B_{\textnormal{train}}$ for model training and $B_{\textnormal{eval}}$  for model evaluation, and satisfies the assumption-free validity condition \eqref{eqn:validity} at level $\alpha$. Then for any $(\cA,P,n)$ that is $(\epsilon,\delta )$-stable (i.e. $\delta^*_{\epsilon} \leq \delta$ and the alternative holds), the power of $\hT$ is bounded as 
\begin{equation}\label{ineq:power-bound-black-box-model}
		\bbP \{ \hT = 1 \} \leq \alpha \cdot \min \Bigg\{  \ \underbrace{\left( \frac{1 - \delta^*_{\epsilon} }{1 - \delta} \right)^{\frac{B_{\textnormal{train} }}{n} } , \frac{\left( \frac{1 - \delta^*_{\epsilon} }{1 - \delta} \right)^{ \frac{N_{\ell}}{n} }}{1 - \frac{B_{\textnormal{train}}}{|\cY|} \wedge 1 } , \frac{\left( \frac{1 - \delta^*_{\epsilon} }{1 - \widetilde{\delta} }\right)^{ \frac{N_{\ell} + N_u}{n} }}{1 - \frac{B_{\textnormal{train}}}{|\cX|}\wedge 1 } }_{\textnormal{same bounds as for the transparent-model setting (Theorem~\ref{thm:limits_evaluate-random})}}, \underbrace{\frac{\left( \frac{1 - \delta^*_{\epsilon} }{1 -\delta }\right)^{ \frac{N_{\ell} + N_u}{n+1} }}{1 - \frac{B_{\textnormal{train}}+B_{\textnormal{eval}}}{|\cX|}\wedge 1 } }_{\substack{\textnormal{new constraint due to limited}\\\textnormal{budget for model evaluation}}} \ \Bigg\},
\end{equation} where $\widetilde{\delta} = (\delta + \frac{1}{n} ) \wedge 1$.
\end{Theorem}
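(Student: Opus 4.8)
The plan is to obtain the first three entries of the minimum in \eqref{ineq:power-bound-black-box-model} for free from Theorem~\ref{thm:limits_evaluate-random}, and to prove the fourth (new) entry by an argument that parallels the limited-$X$-data bound of Section~\ref{sec:X-space-bound-pf} but exploits the extra blind spot created by the model-evaluation budget. For the first three terms, note that any black-box test with black-box models (Definition~\ref{def:black-box-test-black-box-model}) is a special case of a black-box test in the sense of Definition~\ref{def:black-box-test} with the same training budget $B_{\textnormal{train}}$: a test that is handed the full fitted models $\hf^{(r)}$ may in particular evaluate $\hf^{(r)}(\cX^{(r)})$ at any points $\cX^{(r)}$ it likes and ignore the rest. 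Hence the bound \eqref{ineq:power-bound} of Theorem~\ref{thm:limits_evaluate-random} applies verbatim and supplies the first three terms, so all the work is in the fourth.

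For the fourth term I would follow the two-step template of Section~\ref{sec:X-space-bound-pf}: a ``rare event'' lemma followed by an averaging step. For $x\in\cX$, let $\cE'_x$ be the event that, during the run of $\hT$, the value $x$ never appears as a feature in any training set $\cD^{(r)}_\ell$ \emph{and} never appears in any evaluation set $\cX^{(r)}$. The key lemma to establish is $\bbP\{\hT=1;\cE'_x\}\le \alpha\left(\frac{1-\delta^*_\epsilon}{1-\delta}\right)^{\frac{N_\ell+N_u}{n+1}}$ for every $x\in\cX$ (up to a vanishing slack that is absorbed in the limit, as with the $1+\tfrac{1}{en}$ factor in Lemma~\ref{lem:X}). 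As in the proof of Lemma~\ref{lem:X}, fix a corruption level $\rho>0$, let $P'$ resample each draw's feature to $x$ independently with probability $\rho$ (so $P'$ agrees with $P$ off the value $x$), and build a modified algorithm $\cA'$ that agrees with $\cA$ except in behavior tied to the value $x$ --- its output at the input $x$, and its output on any data set whose features include $x$, are overwritten by a value depending only on the training sample size in a $>\epsilon$-separated fashion. The modification should be engineered so that the instability event for $(\cA',P',n)$ in Definition~\ref{def:stability} is the union of (a) the original instability event for $(\cA,P,n)$ on the uncorrupted data and (b) the event that the corruption inserts $x$ into at least one of the $n+1$ positions $X_1,\dots,X_{n+1}$ appearing in Definition~\ref{def:stability}; these two events are independent, so the instability probability is $\delta^*_\epsilon+(1-\delta^*_\epsilon)\bigl(1-(1-\rho)^{n+1}\bigr)$, and choosing $\rho$ so that this just exceeds $\delta$ forces $(1-\rho)^{n+1}\to\frac{1-\delta}{1-\delta^*_\epsilon}$. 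Since $\cA'$ differs from $\cA$ only through the value $x$, on the event $\cE'_x$ the black-box-model test issues exactly the same queries and receives exactly the same answers whether it interacts with $(\cA',P')$ or with $(\cA,P)$ --- the only things $\hT$ ever sees are $\cD_\ell,\cD_u$, the generated sets $\cD^{(r)}_\ell$, and the evaluations $\hf^{(r)}(\cX^{(r)})$, none of which touch $x$ on $\cE'_x$ --- so on the further event $\mathcal{G}$ that the corruption leaves $\cD_\ell$ and $\cD_u$ untouched (probability $(1-\rho)^{N_\ell+N_u}$, independent of the clean draw) the run of $\hT$ under $(\cA',P')$ coincides pathwise with its run under $(\cA,P)$. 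Applying validity \eqref{eqn:validity} to the unstable triple $(\cA',P',n)$ gives $\alpha\ge\bbP_{P'}\{\hT(\cA')=1\}\ge\bbP_{P'}\{\hT(\cA')=1;\cE'_x;\mathcal{G}\}=\bbP_P\{\hT(\cA)=1;\cE'_x\}\cdot(1-\rho)^{N_\ell+N_u}$, and letting $\rho$ approach its threshold yields the lemma.

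With the lemma in hand, the averaging step mirrors Sections~\ref{sec:Y-space-pf} and \ref{sec:X-space-bound-pf}. Assuming $|\cX|>B_{\textnormal{train}}+B_{\textnormal{eval}}$ (else the bound is trivial), fix any integer $M$ with $B_{\textnormal{train}}+B_{\textnormal{eval}}<M\le|\cX|$ and distinct $x_1,\dots,x_M\in\cX$. Because $\sum_r|\cD^{(r)}_\ell|\le B_{\textnormal{train}}$ and $\sum_r|\cX^{(r)}|\le B_{\textnormal{eval}}$, at most $B_{\textnormal{train}}+B_{\textnormal{eval}}$ of the indices $i$ can have $\cE'_{x_i}$ fail, so $\sum_{i=1}^M\bbP\{\hT=1;\cE'_{x_i}\}=\bbE\bigl[\indi\{\hT=1\}\sum_i\indi\{\cE'_{x_i}\}\bigr]\ge\bbP\{\hT=1\}\cdot(M-B_{\textnormal{train}}-B_{\textnormal{eval}})$. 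Combining with the lemma, dividing, and taking the infimum over admissible $M$ (distinguishing $|\cX|<\infty$ from $|\cX|=\infty$ as in Section~\ref{sec:Y-space-pf}) gives $\bbP\{\hT=1\}\le\alpha\left(\frac{1-\delta^*_\epsilon}{1-\delta}\right)^{\frac{N_\ell+N_u}{n+1}}\big/\bigl(1-\tfrac{B_{\textnormal{train}}+B_{\textnormal{eval}}}{|\cX|}\wedge1\bigr)$, which is exactly the fourth term; taking the minimum of all four bounds completes the proof.

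The step I expect to be the main obstacle is the construction and verification of $(\cA',P')$ for the key lemma, where two requirements must be reconciled. On one side is indistinguishability on $\cE'_x$, which forces $\cA'$ to deviate from $\cA$ only in ways invisible unless one trains on or evaluates at $x$; the point is that, unlike the transparent-model setting of Theorem~\ref{thm:limits_evaluate-random}, here we may also freely alter $\cA$'s output at the single input $x$, which is what allows the instability to be routed through the test point $X_{n+1}$ in addition to the $n$ training points and thereby improves the exponent from $\tfrac{N_\ell+N_u}{n}$ to $\tfrac{N_\ell+N_u}{n+1}$. On the other side is the instability bookkeeping for $(\cA',P',n)$: one must control the effect of the corrupted training features on $\cA$'s baseline behavior, absorb the (bounded) contribution of any native mass $P_X(\{x\})>0$, and handle the boundary case in which exactly one of $\hf_n,\hf_{n-1}$ has $x$ in its training set, so that the clean decomposition of the instability event and the resulting limit $(1-\rho)^{n+1}\to\frac{1-\delta}{1-\delta^*_\epsilon}$ --- without the $\tfrac1n$ penalty that appears in Lemma~\ref{lem:X} --- go through.
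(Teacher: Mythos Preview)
Your proposal is correct and follows essentially the same route as the paper: inherit the first three bounds from Theorem~\ref{thm:limits_evaluate-random} (since the black-box-model test is a restriction of the transparent-model test), and for the fourth bound define the event $\widetilde\cE_x$ that $x$ appears in neither any $\cD^{(r)}_\ell$ nor any $\cX^{(r)}$, prove a Lemma~\ref{lem:X}-style bound $\bbP\{\hT=1;\widetilde\cE_x\}\le\alpha\bigl(\tfrac{1-\delta^*_\epsilon}{1-\delta}\bigr)^{(N_\ell+N_u)/(n+1)}$ via a perturbed pair $(\cA',P')$ that is invisible on $\widetilde\cE_x$, and finish with the averaging argument over $M$ distinct points in $\cX$.

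One remark on the concerns you flag at the end: the paper's construction resolves them cleanly, and the $\tfrac{1}{n}$ penalty of Lemma~\ref{lem:X} genuinely disappears here. The key is to modify $\cA$ \emph{only} on training sets of size exactly $n$, setting $\cA'(\cD;\xi)(x')=\cA_1(\cD;\xi)(x')$ whenever $|\cD|=n$ and either $x\in\cD$ or $x'=x$. Then $\hf_{n-1}=\cA(\cD_{n-1};\xi)$ is never altered (so your ``boundary case'' does not arise), and by the definition of $\cA_1$ instability is forced whenever any one of the $n+1$ points $X_1,\dots,X_{n+1}$ equals $x$. This gives the lower bound $[1-(1-c)^{n+1}]+(1-c)^{n+1}\delta^*_\epsilon$ directly --- native mass $P_X(\{x\})>0$ only pushes this higher --- and the threshold $c>1-\bigl(\tfrac{1-\delta}{1-\delta^*_\epsilon}\bigr)^{1/(n+1)}$ yields the stated exponent with no slack. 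Your description of the override as ``depending only on the training sample size'' is slightly too restrictive; the paper's $\cA_1$ depends on the full data set $\cD$ to guarantee the $>\epsilon$ gap against $\cA(\cD_{n-1};\xi)$ without assuming boundedness of $\cA$.
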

Comparing with Theorem \ref{thm:limits_evaluate-random}, we now have an additional upper bound on the power due to limited budget for model evaluation.
\paragraph{Comparison to existing work.}
Similarly to the transparent-model setting, we can compare to the results of \cite{kim2021black} in the case of infinite data spaces $\cX$ and $\cY$.
When $|\cX|=|\cY|=\infty$, then even for arbitrarily large or unlimited $B_{\textnormal{train}}$ and $B_{\textnormal{eval}}$  (i.e., we may make arbitrarily many---but still finitely many---calls to train $\cA$ and to evaluate the fitted models), we have $\frac{B_{\textnormal{train}}}{|\cY|} , \frac{B_{\textnormal{train}}+B_{\textnormal{eval}}}{|\cX|} \to 0$ and so \eqref{ineq:power-bound-black-box-model} implies
\[
		\bbP \{ \hT = 1 \}  \leq \alpha \left( \frac{1 - \delta^*_{\epsilon} }{1 - \delta} \right)^{ \frac{N_{\ell}}{n} \wedge \frac{N_{\ell} + N_u}{n+1}  } .
\]This exactly recovers the result of \citet[Theorem 2]{kim2021black} and demonstrates the hardness of testing algorithmic stability when 
the available data is limited. 

\begin{figure}[t]
	\centering\medskip
	\fbox{
	\begin{tikzpicture}[
	roundnode/.style={circle, draw=gray!60, fill=gray!5, thick, minimum size=13mm},
	  >=stealth,
    node distance=3cm,
    auto
	]
		\node (left) [fill=blue!25,text width=3.3cm] at (-3.8,-1) {Input:\\ labeled data $\cD_{\ell}$,\\ unlabeled data $\cD_{u}$,\\ algorithm $\cA$,\\
	 seeds $\zeta,\zeta_1,\zeta_2,\dots$};
		\draw[-latex, line width=0.5mm] (-2,-1) -> (-1,-1);
		\draw[black, thick] (-1,-4.3) rectangle (4,2.1);
		\node (upper1) [text width=9cm] at (2,3.) {Iterate for rounds $r = 1,2,\ldots,\hr$, subject to computational constraints $\sum_{r=1}^{\hr}|\cD^{(r)}_\ell|\leq B_{\textnormal{train}}$  and $\sum_{j = 1}^{\hr} |\cX^{(r)}| \leq B_{\textnormal{eval}}$};
			\node (upper) [fill=blue!25,text width=4.5cm] at (1.5,0.9) {Generate labeled data set $\cD^{(r)}_{\ell}$,  random seed $\xi^{(r)}$ and an unlabeled data set, $\cX^{(r)}$};
		\node (down) [fill=blue!25,text width=4.5cm] at (1.5,-2.9) {Call the algorithm to train a model: $\hf^{(r)} = \cA(\cD^{(r)}_{\ell};\xi^{(r)})$, then evaluate $\hf^{(r)}(\cX^{(r)})$};
		\draw[ultra thick,->] (0,-0.1) .. controls (-0.5,-0.95) .. (0.,-1.8);
		\draw[ultra thick,->] (2.9,-1.8) .. controls (3.4,-0.95) .. (2.9,-0.1);
		\draw[-latex, line width=0.5mm] (4.,-1) -> (4.8,-1);
		\node (right) [fill=blue!25,text width=3.7cm] at (6.8,-1) {Output:\\ $\hT(\cA,\cD_{\ell}, \cD_u ) \in \{0,1\}$};
	\end{tikzpicture}
}
	
	\caption{Black-box test with black-box models under computational constraints. Here in round $r$, the generating process of the new labeled data set $\cD_{\ell}^{(r)}$ for training, the unlabeled data set $\cX^{(r)}$ for evaluation and the new random seed $\xi^{(r)}$ can depend on the input data and on all the past information from previous rounds---e.g., we may create a new labeled data set and an unlabeled data set by resampling from past data. The input random seeds $\zeta,\zeta_1,\zeta_2,\dots\iidsim\textnormal{Unif}[0,1]$ may be used to provide randomization if desired. The procedure stops at a data-dependent stopping time $\hr$ (i.e., we may use any stopping criterion), subject to the computational constraints that (1) the total number of training points used in calls to $\cA$ cannot exceed the budget $B_{\textnormal{train}}$ and (2) the total number of data points used to evaluate the fitted models cannot exceed the budget $B_{\textnormal{eval}}$.
	}
	\label{fig:black-box-alg-black-box-model}
\end{figure}
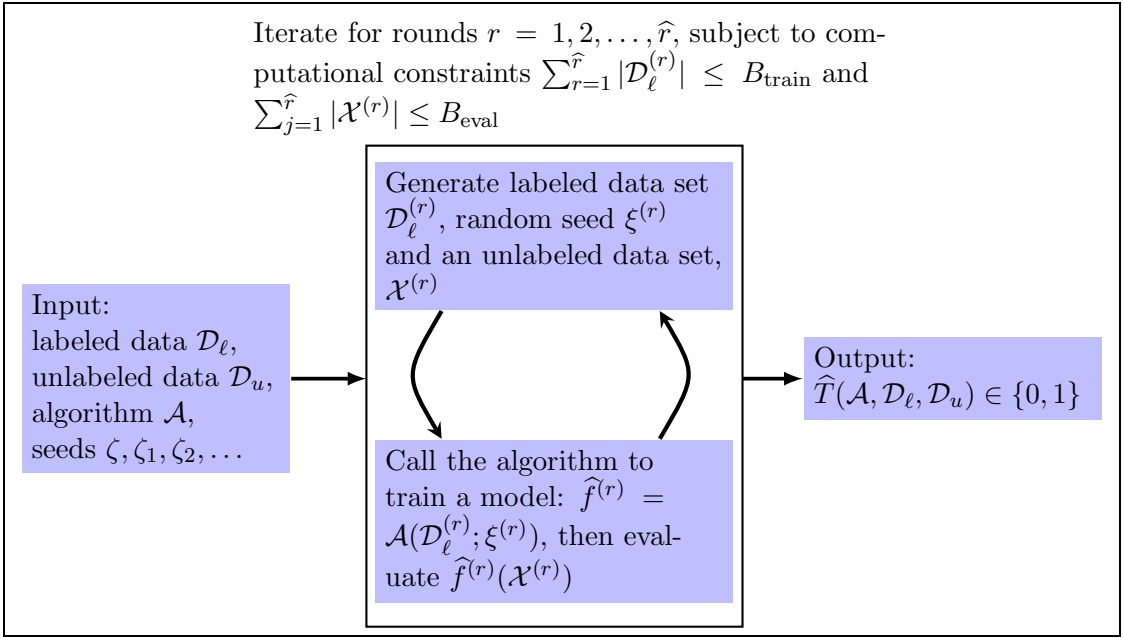

\section{Proof of Lemma \ref{lem:Y} }
At a high level, the main idea of the proof of this part
is to construct a corrupted distribution $P'$, and a modified algorithm $\cA'$, so that $(\cA',P',n)$ is \emph{not} stable,
but it is difficult to distinguish between $\cA',P'$ and $\cA,P$ given limited data. This proof extends the ideas constructed in
the proof of \cite[Theorem 2]{kim2021black} to this more complex setting where $|\cY|$ is no longer assumed to be infinite.

Fix any $c\in[0,1]$.
Define a new distribution
\[P' = c \cdot (P_X\times\delta_y) + (1-c)\cdot P,\]
where $P_X$ is the marginal of $X$ under $(X,Y)\sim P$, while $\delta_y$ is the point mass at $y$---note that the marginal distribution of $X$ is the same under $P'$ as under $P$.
Now we define a new algorithm  $\cA'$ as 
\[\cA'(\cD;\xi) = \begin{cases}\cA_{1}(\cD; \xi), & \textnormal{ if }|\cD| = n \textnormal{ and }y\in \cD,\\
 \cA(\cD;\xi), & \textnormal{ otherwise} ,\end{cases}\]
 	where the informal notation $y\in\cD$ denotes that at least one of the response values in the data points contained in $\cD$ is equal to $y$,
	and $\cA_{1}(\cD; \xi)$ is the function given by $\cA_{1}(\cD; \xi)(x) = 1 + \epsilon + \max_{\sigma \in S_{|\cD|}} \cA( (\cD_{\sigma})_{-|\cD|}; \xi )(x)$. Here $S_{|\cD|}$ denotes the set of all permutations on a set of size $|\cD|$, and the notation $(\cD_{\sigma})_{-|\cD|}$ means that we first permute data set $\cD$, i.e., $\left( (x_{\sigma(1)}, y_{\sigma(1)}), \ldots, (x_{\sigma(|\cD|)}, y_{\sigma(|\cD|)} ) \right)$, and then remove the last data point. 
	In other words, we have constructed $\cA'$ so that if the training data set $\cD$ has $n$ data points and $y \in \cD$, then we will inevitably see instability, i.e., we will have $|\cA'(\cD_n; \xi)(x) - \cA'(\cD_{n-1}; \xi)(x) | > \epsilon$, for any $x$.

 Next we verify that $(\cA',P',n)$ is not stable. Let $\cD_n$ denote a training set of size $n$ drawn i.i.d.\ from $P'$.
 Let $\cE_{\cD_n\sim P}$ be the event
that, when sampling the training set $\cD_n$, all $n$ samples are drawn from the second component of the mixture, i.e., from $P$.
Then this event holds with probability $(1-c)^n$ by construction. Moreover, on the event that $\cE_{\cD_n\sim P}$ does not occur, we have
\[\bbP_{P'}\{ |\cA'(\cD_n;\xi)(X_{n+1}) - \cA'(\cD_{n-1};\xi)(X_{n+1})| > \epsilon \mid \cE_{\cD_n\sim P}^c\} =1\]
(since, if at least one training sample is drawn from the first component of the mixture, i.e., $P_X\times\delta_y$, then we have $y\in\cD_n$
and so instability must occur by construction of $\cA'$), while if instead $\cE_{\cD_n\sim P}$ does occur, we calculate
\begin{multline*}\bbP_{P'}\{ |\cA'(\cD_n;\xi)(X_{n+1}) - \cA'(\cD_{n-1};\xi)(X_{n+1})| > \epsilon \mid \cE_{\cD_n\sim P}\} \\
\geq \bbP_P\{ |\cA(\cD_n;\xi)(X_{n+1}) - \cA(\cD_{n-1};\xi)(X_{n+1})| > \epsilon\}  = \delta^*_\epsilon,\end{multline*}
where the inequality holds by construction of $\cA'$. Therefore,
\[\bbP_{P'}\{ |\cA'(\cD_n;\xi)(X_{n+1}) - \cA'(\cD_{n-1};\xi)(X_{n+1})| > \epsilon\}
\geq \left[ 1 - (1-c)^n\right] + (1-c)^n \delta^*_\epsilon,\]
and in particular, this quantity is $>\delta$ for any $ c > 1 - \left(\frac{1-\delta}{1-\delta^*_\epsilon}\right)^{1/n}$.
Finally, since $\cA'$ returns identical output to $\cA$ on each training set $\cD^{(r)}_\ell$ as long as $y\not\in\cD^{(r)}_\ell$,
therefore on the event $\cE_y$ (i.e., the event $y\not\in\cD^{(1)}_\ell\cup\dots\cup\cD^{(\hr)}_\ell$) we must  have $\hT(\cA',\cD_\ell,\cD_u)=1$ if and only if $\hT(\cA,\cD_\ell,\cD_u)=1$
(a more formal proof of this type of statement can be found in the proof of \citet[Theorem 2]{kim2021black}).
Combining these calculations, we have
\begin{multline*}
		\bbP_P\{\hT(\cA,\cD_\ell,\cD_u)=1; \cE_y\} = 
\bbP_P\{\hT(\cA',\cD_\ell,\cD_u)=1; \cE_y\} \leq \bbP_P\{\hT(\cA',\cD_\ell,\cD_u)=1\} \\
\leq (1-c)^{- N_\ell}\cdot \bbP_{P'}\{\hT(\cA',\cD_\ell,\cD_u)=1\} 
\leq \alpha (1-c)^{-N_\ell},
	\end{multline*}
where the last step holds by validity of $\hT$~\eqref{eqn:validity}, since $(\cA',P',n)$ is not $(\epsilon,\delta)$-stable.
We have established that this is true for any 
$ c > 1 - \left(\frac{1-\delta}{1-\delta^*_\epsilon}\right)^{1/n}$, and therefore,
\[\bbP_P\{\hT(\cA,\cD_\ell,\cD_u)=1; \cE_y\} 
\leq \inf_{c > 1 - \left(\frac{1-\delta}{1-\delta^*_\epsilon}\right)^{1/n}} 
 \alpha (1-c)^{-N_\ell}
 = \alpha \left(\frac{1-\delta^*_\epsilon}{1-\delta}\right)^{N_\ell/n}.\]

\section{Proof of Lemma \ref{lem:X} }
The proof of this lemma follows a similar construction as for the proof of Lemma~\ref{lem:Y} (for $Y$ in place of $X$),
but is more complex since the value of $X$ plays a role both in the training data and in the test point.

Fix any $c\in[0,1]$.
Define a new distribution
\[P' = c \cdot (\delta_x\times P_Y) + (1-c)\cdot P,\]
where $P_Y$ is the marginal of $Y$ under $(X,Y)\sim P$, while $\delta_x$ is the point mass at $x$. 
Now define a new algorithm  $\cA'$ as 
\begin{equation}\label{def:A-prime-x-train}
	\cA'(\cD;\xi) = \begin{cases}\cA_{1}(\cD; \xi), & \textnormal{ if }|\cD| = n \textnormal{ and }x\in \cD,\\
 \cA(\cD;\xi), & \textnormal{ otherwise} ,\end{cases}
\end{equation}
 	where $\cA_{1}$ is defined as in the proof of Lemma~\ref{lem:Y}, and where $x\in\cD$ is interpreted as before.
Now we see that, for an appropriate choice of $c$, $(\cA',P',n)$ is unstable. We have
\[ \bbP_{P'} \left\{ |\cA'(\cD_n; \xi)(X_{n+1}) - \cA'(\cD_{n-1}; \xi)(X_{n+1}) | > \epsilon  \right\} \geq \left[1 - (1-c)^n\right] + (1-c)^{n+1}\delta^*_\epsilon,\]
where,  similarly to the analogous calculation in the proof of Lemma~\ref{lem:Y}, the term in square brackets arises from the event that at least one of the $n$ training points is drawn
from the first component in the mixture distribution $P'$, i.e., from $\delta_x\times P_Y$,
while the remaining term arises from the event that all $n$ training points and the test point $X_{n+1}$
are drawn from the second component, $P$.
Next, we have
\[ 1 - (1-c)^n + (1-c)^{n+1}\delta^*_\epsilon
\geq 1 - (1-c)^n(1 - \delta^*_\epsilon) - \frac{\delta^*_\epsilon}{en}\geq 1 - (1-c)^n(1 - \delta^*_\epsilon) - \frac{\delta}{en},\]
where the first inequality holds since $ ((1-c)^n - (1-c)^{n+1})  \leq  \sup_{t\in[0,1]}\{t^n - t^{n+1}\} \leq \frac{1}{en}$,
and the second inequality holds since $\delta^*_\epsilon\leq \delta$.
Taking any $c>1- \left(\frac{1-\delta(1+ \frac{1}{en})}{1-\delta^*_\epsilon}\right)^{1/n}$, we have $$\bbP_{P'} \left\{ |\cA'(\cD_n; \xi)(X_{n+1}) - \cA'(\cD_{n-1}; \xi)(X_{n+1}) | > \epsilon  \right\} >\delta.$$

Finally, since $\cA'$ returns identical output to $\cA$ on each training set $\cD^{(r)}_\ell$ as long as $x\not\in\cD^{(r)}_\ell$,
on the event $\cE_x$ we must  have $\hT(\cA',\cD_\ell,\cD_u)=1$ if and only if $\hT(\cA,\cD_\ell,\cD_u)=1$.
Then\begin{multline*}\bbP_P\{\hT(\cA,\cD_\ell,\cD_u)=1; \cE_x\} = 
\bbP_P\{\hT(\cA',\cD_\ell,\cD_u)=1; \cE_x\} \leq \bbP_P\{\hT(\cA',\cD_\ell,\cD_u)=1\} \\
\leq (1-c)^{- (N_\ell+N_u)}\cdot \bbP_{P'}\{\hT(\cA',\cD_\ell,\cD_u)=1\} 
\leq \alpha (1-c)^{-(N_\ell+N_u)},\end{multline*}
where the last step holds by validity of $\hT$~\eqref{eqn:validity}, since $(\cA',P',n)$ is not $(\epsilon,\delta)$-stable.
We have established that this is true for any $c\in[0,1]$ with $ c> 1-\left(\frac{1-\delta (1+ \frac{1}{en})}{1-\delta^*_\epsilon}\right)^{1/n}$, this proves the desired bound.

\section{Proof of Lemma \ref{lem:Btrain} }
We define an algorithm $\cA'$, that is a slight modification of $\cA$. The idea will be that $\cA'$ is unstable,
but cannot be distinguished from $\cA$ on the event $\cE_R$. We define\footnote{In this construction,
we are using the fact that algorithms produce functions that map from $\cX$ to $\bbR$, as in~\eqref{eqn:define_alg}---i.e., fitted models return real-valued
predictions. In some settings, for instance if $\cY=\{0,1\}$ or $\cY=[0,1]$, it may be more natural to constrain
fitted models to return predictions in a bounded subset of $\bbR$, e.g., in $[0,1]$. In that type of setting, the same
proof techniques for all our results still apply with a slight modification
of the construction of the modified algorithm $\cA'$. The resulting bound on power is slightly weaker, but the difference
vanishes as $\epsilon\to0$. 
}
 \begin{equation*}
	\cA'(\cD;\xi) = \begin{cases}\cA_{1}(\cD; \xi), & \textnormal{ if }|\cD| = n \textnormal{ and } \xi \in R,\\
 \cA(\cD;\xi), & \textnormal{ otherwise},\end{cases}
\end{equation*}  where $\cA_1$ is defined as in the proof of Lemma~\ref{lem:Y}.
Next, we verify that $(\cA',P,n)$ is not stable: we have
\begin{align*}
		& \bbP_{P} \left\{ |\cA'(\cD_n; \xi)(X_{n+1}) - \cA'(\cD_{n-1}; \xi)(X_{n+1}) | > \epsilon  \right\} \\
		&=   \bbP_{P} \left\{ |\cA'(\cD_n; \xi)(X_{n+1}) - \cA'(\cD_{n-1}; \xi)(X_{n+1}) | > \epsilon , \xi \in R \right\} \\
		& \hspace{1in} {}+ \bbP_{P} \left\{ |\cA'(\cD_n; \xi)(X_{n+1}) - \cA'(\cD_{n-1}; \xi)(X_{n+1}) | > \epsilon , \xi \notin R   \right\}\\
		&=   \bbP_{P} \left\{\xi \in R \right\} + \bbP_{P} \left\{ |\cA(\cD_n; \xi)(X_{n+1}) - \cA(\cD_{n-1}; \xi)(X_{n+1}) | > \epsilon , \xi \notin R   \right\},
	\end{align*}
	since by construction of $\cA'$, if $\xi\in R$ then we must have instability;
	while if $\xi\not\in R$ then $\cA'$ returns the same fitted model as $\cA$.
	Moreover, since $\xi\sim\textnormal{Unif}[0,1]$, we have $\bbP_{P} \left\{\xi \in R \right\} = \textnormal{Leb}(R)$.
	And, by definition of the function $f$, $\bbP_{P} \left\{ |\cA(\cD_n; \xi)(X_{n+1}) - \cA(\cD_{n-1}; \xi)(X_{n+1}) | > \epsilon\mid \xi\not\in R\right\} = \bbE_P[f(\xi)\mid \xi\not\in R]$. Combining everything, then,
	\[ \bbP_{P} \left\{ |\cA'(\cD_n; \xi)(X_{n+1}) - \cA'(\cD_{n-1}; \xi)(X_{n+1}) | > \epsilon  \right\}  =  \textnormal{Leb}(R) + (1- \textnormal{Leb}(R))\cdot  \bbE_P[f(\xi)\mid \xi\not\in R].\] Since this quantity is assumed to be $>\delta$, by the assumption in the lemma,
	this verifies  that $(\cA',P,n)$ is not $(\epsilon,\delta)$-stable. Thus by validity of $\hT$~\eqref{eqn:validity} we have
	$\bbP_P(\hT(\cA',\cD_\ell,\cD_u)=1)\leq \alpha$.
Next, observe that when we run the test $\hT$ on data sets $\cD_\ell$ and $\cD_u$ with either algorithm $\cA$ or $\cA'$, by construction of $\cA'$ the outcome of the test must be the same on the event $\cE_R$. So, we have
\[\bbP_P(\hT(\cA,\cD_\ell,\cD_u)=1; \cE_R)
= \bbP_P(\hT(\cA',\cD_\ell,\cD_u)=1; \cE_R) \leq \bbP_P(\hT(\cA',\cD_\ell,\cD_u)=1)\leq \alpha.\]

\section{Proof of Therorem \ref{thm:limits_evaluate-deter}}

First, in terms of the power upper bound due to the limited samples in $\cX$ and $\cY$, it is easy to check that the proof is the same as the corresponding part in Theorem \ref{thm:limits_evaluate-random}. Next, we show the power upper bound due to computational limit. We fix a partition of the data space into $M$ sets,
\[\cX\times\cY = \cup_{m=1}^M C_m,\]
and given the dataset $\cD_n = ((X_1,Y_1),\dots,(X_n,Y_n))$, we define its vector of counts as 
\[c(\cD_n) = (c_1(\cD_n),\dots,c_M(\cD_n))\textnormal{ where }
c_m(\cD_n) = \sum_{i=1}^n\One{(X_i,Y_i)\in C_m}.\]
That is, $c(\cD_n)$ is a vector of nonnegative integers summing to $n$, which reveals how the $n$ data points are partitioned over the sets $C_1,\dots,C_M$. This random vector takes values in the set
\[\mathbf{I}_{n,M} = \left\{\mathbf{i}=(i_1,\dots,i_M)\in\mathbb{Z}^M : i_1,\dots,i_M\geq 0, \sum_{m=1}^M i_m = n\right\}.\]

Now we are ready for our construction. For any $q\in\{0,1\}^{\mathbf{I}_{n,M}}$, define an algorithm $\cA'$ as 
\[\cA'(\cD,\xi) = \begin{cases}
    \cA_1(\cD,\xi), &\textnormal{ if $|\cD|=n$ and $q_{c(\cD)} = 1$},\\
    \cA(\cD,\xi), & \textnormal{ otherwise,}\end{cases}
\] where $\cA_{1}(\cD; \xi)(x) = 1 + \epsilon + \max_{\sigma \in S_{|\cD|}} \cA( (\cD_{\sigma})_{-|\cD|}; \xi )(x)$. First we check the stability of $\cA'$: for data sampled i.i.d.\ from $P$, we have
\begin{align*}
    &\PP{|[\cA'(\cD_n,\xi)](X_{n+1}) - [\cA'(\cD_{n-1},\xi)](X_{n+1})|\leq \epsilon}\\
    &=\PP{|[\cA'(\cD_n,\xi)](X_{n+1}) - [\cA'(\cD_{n-1},\xi)](X_{n+1})|\leq \epsilon, q_{c(\cD_n)} = 1} \\
    &\hspace{1in}+ \PP{|[\cA'(\cD_n,\xi)](X_{n+1}) - [\cA'(\cD_{n-1},\xi)](X_{n+1})|\leq \epsilon,q_{c(\cD_n)} = 0}\\
    &=\PP{|[\cA_1(\cD_n,\xi)](X_{n+1}) - [\cA_1(\cD_{n-1},\xi)](X_{n+1})|\leq \epsilon, q_{c(\cD_n)} = 1}\\
    &\hspace{1in} + \PP{|[\cA(\cD_n,\xi)](X_{n+1}) - [\cA(\cD_{n-1},\xi)](X_{n+1})|\leq \epsilon,q_{c(\cD_n)} = 0}\\
    &= \PP{|[\cA(\cD_n,\xi)](X_{n+1}) - [\cA(\cD_{n-1},\xi)](X_{n+1})|\leq \epsilon,q_{c(\cD_n)} = 0},
\end{align*}
by definition of $\cA'$ and of $\cA_1$. Now define
\[p_{\mathbf{i}} =  \PP{|[\cA(\cD_n,\xi)](X_{n+1}) - [\cA(\cD_{n-1},\xi)](X_{n+1})|\leq \epsilon, c(\cD_n) = \mathbf{i}},\]
for each $\mathbf{i}\in\mathbf{I}_{n,M}$.
Then 
\[ \One{q_{c(\cD_n)}=0} = \sum_{\mathbf{i}\in\mathbf{I}_{n,M}}(1-q_{\mathbf{i}})\One{c(\cD_n)=\mathbf{i}},\]
and so the above calculation can be rewritten as
\[
    \PP{|[\cA'(\cD_n,\xi)](X_{n+1}) - [\cA'(\cD_{n-1},\xi)](X_{n+1})|\leq \epsilon}\\
    = \sum_{\mathbf{i}\in\mathbf{I}_{n,M}} (1-q_{\mathbf{i}})\cdot p_{\mathbf{i}}.
\]
The distribution-free validity of $\hT$ implies that
\[\sum_{\mathbf{i}\in\mathbf{I}_{n,M}} (1-q_{\mathbf{i}})\cdot p_{\mathbf{i}}<1- \delta \ \Longrightarrow \ \PP{\hT(\cA',\cD_\ell,\cD_u) = 1}\leq \alpha.\]

Next, by definition of the black-box test $\hT$ and by construction of $\cA'$, the test must return the same answer for $\cA$ as for $\cA'$ on the event that the algorithm is \emph{not} called on any dataset $\cD$ for which $q_{c(\cD)}=1$. In other words,
\begin{multline*}\One{\hT(\cA,\cD_\ell,\cD_u) = 1, q_{c(\cD^{(1)}_\ell)} = \cdots = q_{c(\cD^{(\hr)}_\ell)} = 0} \\= \One{\hT(\cA',\cD_\ell,\cD_u) = 1, q_{c(\cD^{(1)}_\ell)} = \cdots = q_{c(\cD^{(\hr)}_\ell)} = 0} \leq \One{\hT(\cA',\cD_\ell,\cD_u) = 1},\end{multline*}
and therefore,
\begin{multline*}\PP{\hT(\cA,\cD_\ell,\cD_u) = 1, q_{c(\cD^{(1)}_\ell)} = \cdots = q_{c(\cD^{(\hr)}_\ell)} = 0} \leq \PP{\hT(\cA',\cD_\ell,\cD_u) = 1}\\
\leq \alpha + \One{\sum_{\mathbf{i}\in\mathbf{I}_{n,M}}  (1-q_{\mathbf{i}})\cdot p_{\mathbf{i}}\geq 1- \delta}.\end{multline*}
This result holds for any \emph{fixed} vector $q\in\{0,1\}^{\mathbf{I}_{n,M}}$. Now, from this point on, we will take $q$ to be random---let $q_{\mathbf{i}}\iidsim \textnormal{Bernoulli}(\rho)$, where $\rho$ will be chosen below. The result above then holds conditional on $q$---that is,
\begin{multline*}\PPst{\hT(\cA,\cD_\ell,\cD_u) = 1, q_{c(\cD^{(1)}_\ell)} = \cdots = q_{c(\cD^{(\hr)}_\ell)} = 0}{q}\\
\leq \alpha + \One{\sum_{\mathbf{i}\in\mathbf{I}_{n,M}} (1-q_{\mathbf{i}})\cdot p_{\mathbf{i}} \geq 1-\delta}.\end{multline*}
After marginalizing over $q$, then,
\begin{multline*}\PP{\hT(\cA,\cD_\ell,\cD_u) = 1, q_{c(\cD^{(1)}_\ell)} = \cdots = q_{c(\cD^{(\hr)}_\ell)} = 0}\\
\leq \alpha + \PP{\sum_{\mathbf{i}\in\mathbf{I}_{n,M}} (1-q_{\mathbf{i}})\cdot p_{\mathbf{i}} \geq 1-\delta}.\end{multline*}
Now we calculate
\[\PPst{q_{c(\cD^{(1)}_\ell)} = \cdots = q_{c(\cD^{(\hr)}_\ell)} = 0}{\hT(\cA,\cD_\ell,\cD_u),\cD^{(1)},\dots,\cD^{(\hr)}} \geq (1-\rho)^{\hr} \geq (1-\rho)^{\frac{B_{\textnormal{train}}}{n}},\]
with equality for the first step in the case that the values $c(\cD^{(1)}),\dots,c(\cD^{(\hr)})$ are all distinct, and with the last step holding due to the computational constraint $\hr\leq \frac{B_{\textnormal{train}}}{n}$. Therefore,
\[\PP{\hT(\cA,\cD_\ell,\cD_u) = 1, q_{c(\cD^{(1)}_\ell)} = \cdots = q_{c(\cD^{(\hr)}_\ell)} = 0} \geq \PP{\hT(\cA,\cD_\ell,\cD_u) = 1}\cdot (1-\rho)^{\frac{B_{\textnormal{train}}}{n}}.\]
Combining all our calculations, and rearranging terms,
\[\PP{\hT(\cA,\cD_\ell,\cD_u) = 1} \leq (1-\rho)^{-\frac{B_{\textnormal{train}}}{n}}\left[\alpha + \PP{\sum_{\mathbf{i}\in\mathbf{I}_{n,M}} (1-q_{\mathbf{i}})\cdot p_{\mathbf{i}} \geq 1-\delta}\right].\]
Now for our final step, we need to bound this probability.  Then
\begin{multline*}\EE{\sum_{\mathbf{i}\in\mathbf{I}_{n,M}}  (1-q_{\mathbf{i}})\cdot p_{\mathbf{i}}}
=(1-\rho) \sum_{\mathbf{i}\in\mathbf{I}_{n,M}} p_{\mathbf{i}} \\= (1-\rho)\PP{|[\cA(\cD_n,\xi)](X_{n+1}) - [\cA(\cD_{n-1},\xi)](X_{n+1})|\leq \epsilon} = (1-\rho)(1-\delta^*_\epsilon),
\end{multline*}
by definition of $p_{\mathbf{i}}$.
And, by Hoeffding's inequality,
\[\PP{\sum_{\mathbf{i}\in\mathbf{I}_{n,M}}  (1-q_{\mathbf{i}})\cdot p_{\mathbf{i}} \geq (1-\rho)(1-\delta^*_\epsilon) + t}\leq \exp\left\{-\frac{2t^2}{\sum_{\mathbf{i}\in\mathbf{I}_{n,M}}p_{\mathbf{i}}^2}\right\}\]
for all $t>0$, since $\sum_{\mathbf{i}\in\mathbf{I}_{n,M}}  (1-q_{\mathbf{i}})\cdot p_{\mathbf{i}}$ is a sum of independent terms, with the term indexed $\mathbf{i}$ lying in $p_{\mathbf{i}}$. 
Finally, $\sum_{\mathbf{i}\in\mathbf{I}_{n,M}}p_{\mathbf{i}}^2\leq \|p\|_\infty\cdot \sum_{\mathbf{i}\in\mathbf{I}_{n,M}}p_{\mathbf{i}} \leq \|p\|_\infty$. Combining with the calculation above, then,
\[\PP{\sum_{\mathbf{i}\in\mathbf{I}_{n,M}}  (1-q_{\mathbf{i}})\cdot p_{\mathbf{i}} \geq (1-\rho)(1-\delta^*_\epsilon) + t}\leq \exp\left\{-\frac{2t^2}{\|p\|_\infty}\right\},\]
and therefore, if we choose $\rho > \frac{\delta - \delta^*_\epsilon}{1-\delta^*_\epsilon}$,
\[\PP{\hT(\cA,\cD_\ell,\cD_u) = 1} \leq (1-\rho)^{-\frac{B_{\textnormal{train}}}{n}}\left[\alpha + \exp\left\{-\frac{2( \rho(1-\delta^*_\epsilon)- (\delta -\delta^*_\epsilon))^2}{\|p\|_\infty}\right\}\right].\]
Setting $\rho = \frac{\delta - \delta^*_\epsilon + \frac{1}{n}}{1-\delta^*_\epsilon}$,
\[\PP{\hT(\cA,\cD_\ell,\cD_u) = 1} \leq \left(\frac{1-\delta^*_\epsilon}{1 - \delta - \frac{1}{n}}\right)^{\frac{B_{\textnormal{train}}}{n}}\left[\alpha + \exp\left\{-\frac{2}{n^2\|p\|_\infty}\right\}\right].\]
Finally, since we have assumed $\sup_{(x,y)\in\cX\times\cY}P(\{(x,y)\}) \leq \gamma < 0.2$, we can take $M=6$ in the following lemma so that $\|p\|_\infty = \mathcal{O}(n^{-2.5})$:
\begin{Lemma}\label{lem:data_counts}
    Fix any $M\geq 2$. Suppose $\sup_{(x,y)\in\cX\times\cY}P(\{(x,y)\}) \leq \gamma$ for some constant $0\leq \gamma < \frac{1}{M-1}$. 
    Then there exists a partition
    $\cX\times\cY = \cup_{m=1}^M C_m$
    such that, for a dataset $\cD_n = ((X_1,Y_1),\dots,(X_n,Y_n))\sim P^n$,
    \[\max_{\mathbf{i}\in\mathbf{I}_{n,M}}\PP{c(\cD_n) = \mathbf{i}} \leq \frac{C_{M,\gamma}}{n^{(M-1)/2}},\]
    where $C_{M,\gamma}$ is a constant depending only on $M$ and $\gamma$.
\end{Lemma}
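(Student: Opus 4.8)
\emph{Plan.} I would split the argument into two essentially independent steps: (A) a combinatorial step that turns the atom-size hypothesis into a ``balanced'' partition, and (B) a local-limit estimate for the multinomial distribution induced by that partition. Throughout, set $\theta := \frac{1-(M-1)\gamma}{M}$, which is positive \emph{precisely} because $\gamma < \frac{1}{M-1}$.

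\emph{Step A: a partition into cells of nonnegligible mass.} I claim the partition can be chosen so that $p_m := P(C_m) \ge \theta$ for every $m$. Decompose $P$ into its purely atomic part (a countable sum of point masses, each of mass $\le\gamma$) and its non-atomic part, and fill the cells greedily: for $m = 1,\dots,M-1$, keep moving remaining atoms into $C_m$ — or, once the atoms are used up, a sliver of the non-atomic part sized exactly to close the gap (legitimate since a non-atomic measure attains every intermediate value on measurable subsets) — until $P(C_m) \ge \theta$, and finally put everything left into $C_M$. Each of $C_1,\dots,C_{M-1}$ then has mass in $[\theta,\theta+\gamma)$; a one-line arithmetic check using $\theta = \frac{1-(M-1)\gamma}{M}$ and $M\ge 2$ shows that at every stage at least $\theta$ units of mass still remain (so the procedure never stalls) and that $P(C_M) > 1-(M-1)(\theta+\gamma) = \theta$ as well. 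Since the other $M-1$ cells carry $\ge\theta$ each, we also get $p_m \le 1-(M-1)\theta \le 1-\theta$. This step is where the hypothesis is sharp: $M-1$ atoms of equal mass $\tfrac{1}{M-1}$ admit no partition into $M$ nonempty cells.

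\emph{Step B: bounding the maximal multinomial pmf.} Now $c(\cD_n) \sim \mathrm{Multinomial}(n;p_1,\dots,p_M)$ with all $p_m \in[\theta,1-\theta]$, and $\PP{c(\cD_n)=\mathbf{i}} = \binom{n}{i_1,\dots,i_M}\prod_m p_m^{i_m}$; write $\hat p := \mathbf{i}/n$. I would case on $\|\hat p - p\|_1$. If $\|\hat p - p\|_1 \ge \theta/2$: from $\binom{n}{i_1,\dots,i_M}\prod_m\hat p_m^{i_m}\le 1$ we get $\binom{n}{i_1,\dots,i_M} \le e^{nH(\hat p)}$ (natural-log entropy), hence $\PP{c(\cD_n)=\mathbf{i}} \le e^{-n\,\mathrm{KL}(\hat p\,\|\,p)} \le e^{-n\theta^2/8}$ by Pinsker's inequality, which is $\le C_{M,\gamma}\,n^{-(M-1)/2}$ since $\sup_{n\ge 1} n^{(M-1)/2}e^{-n\theta^2/8}<\infty$. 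If $\|\hat p - p\|_1 < \theta/2$: then $i_m = n\hat p_m > (\theta/2)n$ for all $m$, so for $n \ge 2/\theta$ every factorial appearing has argument $\ge 1$ and Stirling's two-sided bounds apply to $\binom{n}{i_1,\dots,i_M}$; the factors $(k/e)^k$ recombine with $n^n e^{-n}$ so that, after multiplying by $\prod_m p_m^{i_m}$, the only ``exponential-scale'' leftover is $\prod_m(np_m/i_m)^{i_m} = e^{-n\,\mathrm{KL}(\hat p\,\|\,p)} \le 1$, and what remains is $e^{1/(12n)}\cdot\sqrt{2\pi n}\big/\big((2\pi)^{M/2}\prod_m\sqrt{i_m}\big) \le C_{M,\gamma}\cdot\sqrt{n}\big/(\theta n/2)^{M/2} = C_{M,\gamma}\,n^{-(M-1)/2}$; the finitely many $n<2/\theta$ are absorbed by the trivial bound $\PP{\cdot}\le 1$. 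Taking $C_{M,\gamma}$ to be the largest of the finitely many $M,\gamma$-dependent constants produced completes the proof.

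\emph{Main obstacle.} Step~A is the conceptual crux — one has to see that the right target is cells of mass $\gtrsim\theta$ and that $\gamma<\frac{1}{M-1}$ is exactly what makes this achievable — but the technical heart is Case~2 of Step~B, a hands-on local-limit computation for the multinomial. The trick that keeps it short is recognizing that the otherwise-awkward residual $\prod_m(np_m/i_m)^{i_m}$ is nothing but $e^{-n\,\mathrm{KL}(\hat p\,\|\,p)}\le 1$, so a single Kullback--Leibler identity drives both cases. A case split is genuinely necessary: Stirling alone breaks down when some $i_m$ is small, whereas dominating $\PP{c(\cD_n)=\mathbf{i}}$ by a single marginal binomial pmf only yields the rate $n^{-1/2}$.
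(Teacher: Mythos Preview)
Your proof is correct and follows the same two–step skeleton as the paper (first build a balanced partition, then bound the multinomial pmf), but the execution of both steps is genuinely different. For Step~A, the paper proceeds inductively: it peels off one cell at a time by finding a set of mass in $[\gamma/2,\gamma]$ (or $[\tfrac{1}{2M-1},\tfrac{2}{2M-1}]$, whichever is relevant) via a separate lemma, then recurses on the complement, ultimately obtaining the lower bound $\min\{\tfrac{1}{2M-1},\,1-(M-1)\gamma\}$; your greedy fill is a single pass and lands on the slightly different threshold $\theta=\tfrac{1-(M-1)\gamma}{M}$, which is cleaner to state and to verify. For Step~B, the paper again argues by induction on $M$, using Berry--Esseen for the base case $M=2$ and then splitting the multinomial into a lower-dimensional multinomial times a binomial; your route via Stirling plus the identity $\prod_m (np_m/i_m)^{i_m}=e^{-n\,\mathrm{KL}(\hat p\|p)}$ (with Pinsker handling the ``far'' case) is more elementary and entirely self-contained, at the cost of needing the case split to cover small $i_m$. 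Both yield the same $n^{-(M-1)/2}$ rate with constants depending only on $M,\gamma$; your argument trades the external input of Berry--Esseen for a short local-limit computation, which is a reasonable exchange.
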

This completes the proof.

\section{Proof of Proposition \ref{prop:test-power} }
The proof is similar to the proof of Theorem 1 in \cite{kim2021black}, we include it here for completeness.
	First, we have $B \sim \textnormal{Binomial}(\lfloor \kappa \rfloor, \delta^*_{\epsilon }  )$, where $\delta^*_{\epsilon }$ is defined in \eqref{def:delta_star}. Now, we verify the validity of this test. Fix any $(\cA, P, n)$ such that $\delta^*_{\epsilon } > \delta$, then 
	\begin{equation*}
		\begin{split}
			\bbP(\hT_{\textnormal{Binom}}(\cA,\cD_{\ell}, \cD_u) = 1) &= \bbP(B < k_*) + a_* \bbP(B = k_*) \\
			& = (1 - a_*) \bbP(B < k_*) + a^*\bbP(B \leq k_*) \\
			& \overset{(a)}\leq (1 - a_*) \bbP(\textnormal{Binomial}(\lfloor \kappa \rfloor, \delta ) < k_*) + a_* \bbP(\textnormal{Binomial}(\lfloor \kappa \rfloor, \delta ) \leq k_* ) \\
			& = \bbP(\textnormal{Binomial}(\lfloor \kappa \rfloor, \delta ) < k_*) + a_* \bbP(\textnormal{Binomial}(\lfloor \kappa \rfloor, \delta ) = k_* ) \\
			& \overset{(b)}= \alpha,
		\end{split}
	\end{equation*} where (a) is because $\textnormal{Binomial}(\lfloor \kappa \rfloor, \delta^*_{\epsilon}  )$ stochastically dominates $\textnormal{Binomial}(\lfloor \kappa \rfloor, \delta  )$ and (b) is by the definition of $k_*$ and $a_*$.
	
	Next, let us compute the power when $\delta < 1 - \alpha^{1/\lfloor \kappa \rfloor }$. Notice that now $\bbP( \textnormal{Binomial}(\lfloor \kappa \rfloor, \delta  )  = 0 ) = (1-\delta)^{ \lfloor \kappa \rfloor } > \alpha$, so it implies that
	\[k_* = 0, \ a_* = \frac{\alpha}{\bbP( \textnormal{Binomial}(\lfloor \kappa \rfloor, \delta  )  = 0 )}.\]
Therefore, by the construction of the Binomial test $\hT$, we have
\begin{equation*}
	\begin{split}
		\bbP(\hT_{\textnormal{Binom}}(\cA,\cD_{\ell}, \cD_u) = 1)  = a_* \bbP(B = 0) = \frac{\alpha}{(1 -\delta)^{\lfloor \kappa \rfloor}} \cdot (1 - \delta^*_{\epsilon })^{\lfloor \kappa \rfloor} = \alpha \left( \frac{1 - \delta^*_{ \epsilon }}{1 - \delta} \right)^{\lfloor \kappa \rfloor}.
	\end{split}
\end{equation*} This finishes the proof of this proposition by recalling the definition of $\lfloor \kappa \rfloor$ in \eqref{def:kappa}.

\section{Proof of Theorem \ref{thm:limits_evaluate-black-box-model}} \label{sec:proof-theorem2}
The first three terms in the power bound are the same as the ones appearing
in Theorem \ref{thm:limits_evaluate-random}. Since this new result, Theorem \ref{thm:limits_evaluate-black-box-model}, places strictly more constraints
on the test $\hT$ (i.e., due to the model evaluation budget $B_{\textnormal{train}}$), 
these three bounds on power hold in this setting as well.
To complete the proof, then, we only need to establish the last bound, namely,
\[
		\bbP \{ \hT = 1 \} \leq \alpha  \cdot \frac{\left( \frac{1 - \delta^*_{\epsilon} }{1 -\delta }\right)^{ \frac{N_{\ell} + N_u}{n+1} }}{1 - \frac{B_{\textnormal{train}}+B_{\textnormal{eval}}}{|\cX|}\wedge 1 } .
\]
The proof is similar to the one in Section \ref{sec:X-space-bound-pf}, except now we define the event $\widetilde\cE_x$ as $x \notin \cD_{\ell}^{(1)}\cup \cX^{(1)} \cup \dots \cup \cD_{\ell}^{(\hr)} \cup \cX^{(\hr)}$ since we are now interested in the role of limited $X$ data in terms of model evaluation as well as model training.
As before, we will need a lemma:
\begin{Lemma}\label{lem:X_eval}
In the setting of Theorem~\ref{thm:limits_evaluate-black-box-model},
for any $x\in\cX$,
\[\bbP\{\hT = 1; \widetilde\cE_x\} \leq \alpha\left(\frac{1-\delta^*_\epsilon}{1-\delta}\right)^{\frac{N_\ell+N_u}{n+1}}.\]
\end{Lemma}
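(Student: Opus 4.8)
The plan is to mirror the proof of Lemma~\ref{lem:X}, but to exploit that in the black-box-model setting the test only observes each fitted model through its values on the evaluation sets $\cX^{(r)}$: this lets us corrupt $\cA$ at the single input point $x$ in a way that is invisible on the event $\widetilde\cE_x$. Concretely, I would fix $c\in(0,1)$, set $P' = c\cdot(\delta_x\times P_Y)+(1-c)\cdot P$ as in the proof of Lemma~\ref{lem:X} (so $P'\geq(1-c)P$ and $P'_X\geq(1-c)P_X$ as measures), and define
\[
\cA'(\cD;\xi)(x') = \begin{cases}\cA_1(\cD;\xi)(x'), & |\cD|=n \textnormal{ and }\big(x\in\cD\textnormal{ or }x'=x\big),\\ \cA(\cD;\xi)(x'), & \textnormal{otherwise},\end{cases}
\]
with $\cA_1$ as in the proof of Lemma~\ref{lem:Y}. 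One checks that $\cA'$ is still symmetric, that $\cA'(\cD;\xi)$ and $\cA(\cD;\xi)$ agree at every input $x'\neq x$ whenever $x\notin\cD$, and that they agree everywhere whenever $|\cD|\neq n$ and $x\notin\cD$.

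First I would show $(\cA',P',n)$ is not $(\epsilon,\delta)$-stable once $c$ is large enough. Conditioning on the event $\cE$ that all $n+1$ of the i.i.d.\ draws $(X_1,Y_1),\dots,(X_{n+1},Y_{n+1})\sim P'$ land in the second component $P$ --- which has probability $(1-c)^{n+1}$ --- I would argue: on $\cE^c$, either one of the first $n$ draws has feature $x$ (so $\hf_n=\cA_1(\cD_n;\xi)$ is corrupted everywhere) or the test point is $x$ (so $\hf_n(x)=\cA_1(\cD_n;\xi)(x)$), and in both cases $|\hf_n(X_{n+1})-\hf_{n-1}(X_{n+1})|\geq 1+\epsilon$, using that $\hf_{n-1}=\cA(\cD_{n-1};\xi)$ is uncorrupted and that $\cA_1(\cD_n;\xi)(x')\geq 1+\epsilon+\cA(\cD_{n-1};\xi)(x')$; while on $\cE$, coupling the runs of $\cA$ and $\cA'$ on the same data shows that instability of $\cA$ forces instability of $\cA'$, so the conditional instability probability is at least $\delta^*_{\epsilon}$. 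Together this gives an instability probability $\geq 1-(1-c)^{n+1}(1-\delta^*_{\epsilon})$, which exceeds $\delta$ whenever $c>1-\big(\tfrac{1-\delta}{1-\delta^*_{\epsilon}}\big)^{1/(n+1)}$.

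Next I would run the indistinguishability argument. Coupling the run of $\hT$ on $\cA$ and on $\cA'$ through the shared input data $\cD_\ell,\cD_u$ and seeds, an induction on the rounds shows that on $\widetilde\cE_x$ every quantity the test observes --- each $\hf^{(r)}(\cX^{(r)})$, the stopping time $\hr$, and the final output --- coincides for $\cA$ and $\cA'$, the point being that for each $x'\in\cX^{(r)}$ we have $x'\neq x$ and $x\notin\cD^{(r)}_\ell$ so neither branch of the construction fires (a careful version of this argument appears in the proof of \citet[Theorem 2]{kim2021black}). Hence $\bbP_P\{\hT(\cA,\cD_\ell,\cD_u)=1;\widetilde\cE_x\}=\bbP_P\{\hT(\cA',\cD_\ell,\cD_u)=1;\widetilde\cE_x\}\leq\bbP_P\{\hT(\cA',\cD_\ell,\cD_u)=1\}$. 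Then I would change measure from $P$ to $P'$ for $\cD_\ell$ and from $P_X$ to $P'_X$ for $\cD_u$, picking up the factor $(1-c)^{-(N_\ell+N_u)}$ from $P^{N_\ell}\times P_X^{N_u}\leq(1-c)^{-(N_\ell+N_u)}(P')^{N_\ell}\times(P'_X)^{N_u}$, invoke the validity~\eqref{eqn:validity} of $\hT$ at the unstable triple $(\cA',P',n)$ to get $\bbP_{P'}\{\hT(\cA',\cdot)=1\}\leq\alpha$, and finally let $c\downarrow 1-\big(\tfrac{1-\delta}{1-\delta^*_{\epsilon}}\big)^{1/(n+1)}$ to obtain the bound $\alpha\big(\tfrac{1-\delta^*_{\epsilon}}{1-\delta}\big)^{(N_\ell+N_u)/(n+1)}$.

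The hard part is the design of $\cA'$: it must be unstable under $P'$ with the \emph{$(n+1)$-st} power of $(1-c)$, which forces the corruption to be triggered by the test point $X_{n+1}=x$ and not only by contaminated training data, yet it must remain undetectable to a test that may probe the fitted models only on the sets $\cX^{(r)}$. The pointwise ``$x'=x$'' corruption is precisely what buys the extra power relative to Lemma~\ref{lem:X} (which uses only the ``$x\in\cD$'' branch and pays a $(1+\tfrac{1}{en})$ fudge factor); it is harmless here because of the black-box-model restriction, whereas in the transparent-model setting such a modification would be visible to $\hT$ and only the coarser corruption is available.
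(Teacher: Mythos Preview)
Your proposal is correct and follows essentially the same approach as the paper's proof: the same mixture $P'=c\cdot(\delta_x\times P_Y)+(1-c)\cdot P$, the same modified algorithm $\cA'$ with the extra ``$x'=x$'' branch in the corruption trigger, the same instability bound $1-(1-c)^{n+1}(1-\delta^*_\epsilon)$, and the same indistinguishability-plus-change-of-measure conclusion. Your closing commentary on why the pointwise corruption is permissible in the black-box-model setting (but not the transparent-model setting) is exactly the conceptual point behind the paper's construction.
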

Now we prove that this lemma implies the desired bound on power. This calculation is similar as for the transparent-model setting (as in Section~\ref{sec:Y-space-pf}).

First, we can assume $|\cX|> B_{\textnormal{train}} + B_{\textnormal{eval}}$,
since otherwise the bound is trivial. 
Fix any integer $M$ with $B_{\textnormal{train}} + B_{\textnormal{eval}}<M\leq |\cX|$  (note that $|\cX|$ may be finite or infinite),
and let $x_1,\dots,x_M\in\cX$ be distinct.
We then have \begin{multline*}
\sum_{i=1}^M \bbP\{\hT = 1; \widetilde\cE_{x_i}\}
= \bbE\left[\indi\{\hT=1\} \cdot \sum_{i=1}^M \indi\{\widetilde\cE_{x_i}\}\right]\\
\geq \bbE\left[\indi\{\hT=1\} \cdot (M - B_{ \textnormal{train} } - B_{\textnormal{eval}})\right] = \bbP\{\hT=1\} \cdot  (M - B_{ \textnormal{train} } - B_{\textnormal{eval}}),
\end{multline*}
where the inequality holds since, due to the computational constraint,
$x\in\cD^{(1)}_\ell \cup\cX^{(1)} \cup\dots\cup\cD^{(\hr)}_\ell\cup\cX^{(\hr)}$ can hold for at most $B_{\textnormal{train}}+B_{\textnormal{eval}}$ many values $x\in\cX$.
Therefore,
\[\bbP\{\hT=1\}  \leq \frac{\sum_{i=1}^M \bbP\{\hT = 1; \widetilde\cE_{x_i}\}}{M - B_{ \textnormal{train} }-B_{\textnormal{eval}}} 
\leq\frac{M\cdot  \alpha\left(\frac{1-\delta^*_\epsilon}{1-\delta}\right)^{\frac{N_\ell+N_u}{n+1}}}{M - B_{ \textnormal{train} }-B_{\textnormal{eval}}} ,\]
where the last step holds by Lemma~\ref{lem:X_eval}.
Since this holds for any $M$ with $B_{\textnormal{train}}-B_{\textnormal{eval}}<M\leq |\cX|$, as in Section~\ref{sec:Y-space-pf}
we can take $M=|\cX|$ to complete the proof for the case $|\cX|<\infty$, or $M\to\infty$ for the case $|\cX|=\infty$.

\begin{proof}[Proof of Lemma~\ref{lem:X_eval}]
The proof of this lemma is similar to that of Lemma~\ref{lem:X} but now we must also account for
limits on model evaluation.
Fix any $c\in[0,1]$.
We define $P'$ exactly as in the proof of Lemma~\ref{lem:X}, i.e., $P' = c \cdot (\delta_x\times P_Y) + (1-c)\cdot P$, but the modified algorithm is defined differently. We define $\cA'$ by specifying, for any training set $\cD$ and random seed $\xi$, the value of the fitted model when evaluated at a test point $x'$:
\begin{equation}\label{def:A-prime-x_eval}
	\cA'(\cD;\xi)(x') = \begin{cases}\cA_{1}(\cD; \xi)(x') & \textnormal{ if }|\cD| = n, \textnormal{ and either $x\in\cD$ or $x'=x$},\\
 \cA(\cD;\xi)(x'), & \textnormal{ otherwise} ,\end{cases}
\end{equation}
 	where $\cA_{1}$ is defined as in the proof of Lemma~\ref{lem:Btrain}.
	With this construction, 
 we then have
	\[ \bbP_{P'} \left\{ |\cA'(\cD_n; \xi)(X_{n+1}) - \cA'(\cD_{n-1}; \xi)(X_{n+1}) | > \epsilon  \right\} \geq \left[1 - (1-c)^{n+1}\right] + (1-c)^{n+1}\delta^*_\epsilon,\]
where, similarly to the analogous calculation in the proofs of Lemmas~\ref{lem:Y} and~\ref{lem:X}, the term in square brackets arises from the event that at least one of the $n+1$ data points (the training set $\cD_n$ and the test point $X_{n+1}$) is drawn
from the first component in the mixture distribution $P'$, i.e., from $\delta_x\times P_Y$,
while the remaining term arises from the event that all $n+1$ data points
are drawn from the second component, $P$.
In particular, we have
\[ \bbP_{P'} \left\{ |\cA'(\cD_n; \xi)(X_{n+1}) - \cA'(\cD_{n-1}; \xi)(X_{n+1}) | > \epsilon  \right\}>\delta\]
as long as $c > 1 -\left(\frac{1-\delta}{1-\delta^*_\epsilon}\right)^{1/(n+1)}$.
 
Next, since $\cA(\cD^{(r)}_\ell;\xi^{(r)})(x') = \cA'(\cD^{(r)}_\ell;\xi^{(r)})(x')$ for all $x'\in\cX^{(r)}$, 
as long as $x\not\in \cD^{(r)}_\ell$ and $x'\neq x$, this means that running the test $\hT$ with $\cA'$ or with $\cA$ returns identical
answers on the event $\widetilde\cE_x$.
We therefore have \begin{multline*}\bbP_P\{\hT(\cA,\cD_\ell,\cD_u)=1;\widetilde\cE_x\} = 
\bbP_P\{\hT(\cA',\cD_\ell,\cD_u)=1; \widetilde\cE_x\} \leq \bbP_P\{\hT(\cA',\cD_\ell,\cD_u)=1\} \\
\leq (1-c)^{- (N_\ell+N_u)}\cdot \bbP_{P'}\{\hT(\cA',\cD_\ell,\cD_u)=1\} 
\leq \alpha (1-c)^{-(N_\ell+N_u)},\end{multline*}
where the last step holds by validity of $\hT$~\eqref{eqn:validity}, since $(\cA',P',n)$ is not $(\epsilon,\delta)$-stable.
We have established that this is true for any $c\in[0,1]$ with $c > 1 -\left(\frac{1-\delta}{1-\delta^*_\epsilon}\right)^{1/(n+1)}$, and so we have
\[\bbP_P\{\hT(\cA,\cD_\ell,\cD_u)=1; \widetilde\cE_x\} 
\leq \alpha \left(\frac{1-\delta^*_\epsilon}{1-\delta}\right)^{\frac{N_\ell+N_u}{n+1}}.\]
\end{proof}

\section{Additional proofs and lemmas}

\begin{proof}[Proof of Lemma~\ref{lem:data_counts}]
     First by Lemma~\ref{lem:partition}, we can find some partition $\cX\times\cY = C_1 \cup \dots \cup C_M$, such that
    \[\min_{m=1,\dots,M} P(C_m) \geq \min\left\{\frac{1}{2M-1},1-(M-1)\gamma\right\}.\]
    Then since $\cD_n$ is sampled from $P^n$, we have
    \[c(\cD_n)\sim\textnormal{Multinomial}(n,(P(C_1),\dots,P(C_M)).\]
    Applying Lemma~\ref{lem:multinom}, then,
    \[\max_{\mathbf{i}} \PP{c(\cD_n) =\mathbf{i}}\leq  \frac{C'_M}{\sqrt{  n^{M-1} \cdot \prod_{m=1}^M P(C_m)}}, \] where $C_M'$ is a constant depends only on $M$.
    Combining all these calculations proves the desired bound.
\end{proof}

\begin{Lemma}\label{lem:multinom}
    Let $M\geq 2$ and let $\mathbf{q}=(q_1,\dots,q_M)\in\Delta_M\subseteq\R^M$, where $\Delta_M$ is the probability simplex. Let $p_{n,\mathbf{q}}$ be the PMF of the Multinomial$(n,\mathbf{q})$ distribution, and assume $q_m>0$ for all $m$. Then for all $n\geq 1$, 
    \[\max_{\mathbf{i}}p_{n,\mathbf{q}}(\mathbf{i}) \leq  \frac{C_M}{\sqrt{n^{M-1} \cdot \prod_{m=1}^M q_m}},\]
    where the maximum is taken over all $\mathbf{i} = (i_1,\dots,i_M)$ where the $i_m$'s are nonnegative integers with $\sum_m i_m = n$, and where the constant $C_M$ depends only on $M$.
\end{Lemma}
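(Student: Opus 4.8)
The plan is to prove Lemma~\ref{lem:multinom} by \emph{Poissonization}, which avoids the nuisance of Stirling-type estimates on the multinomial PMF when some coordinates $i_m$ vanish. Let $Z_1,\dots,Z_M$ be independent with $Z_m\sim\textnormal{Poisson}(nq_m)$, so that $\sum_{m=1}^M Z_m\sim\textnormal{Poisson}(n)$ and, conditionally on the event $\{\sum_m Z_m = n\}$, the vector $(Z_1,\dots,Z_M)$ has exactly the $\textnormal{Multinomial}(n,\mathbf{q})$ law. This gives, for every $\mathbf{i}\in\mathbf{I}_{n,M}$,
\[
p_{n,\mathbf{q}}(\mathbf{i}) = \frac{\prod_{m=1}^M \PP{Z_m = i_m}}{\PP{\textnormal{Poisson}(n) = n}} \le \frac{\prod_{m=1}^M \max_{k\ge 0}\PP{Z_m = k}}{\PP{\textnormal{Poisson}(n) = n}},
\]
so it suffices to lower bound the denominator and to upper bound each Poisson mode probability.

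For the denominator, $\PP{\textnormal{Poisson}(n)=n} = e^{-n}n^n/n!$, and Stirling's upper bound $n!\le C_0\sqrt{n}\,(n/e)^n$ (valid for all $n\ge 1$, with an absolute constant $C_0$) gives $\PP{\textnormal{Poisson}(n)=n}\ge 1/(C_0\sqrt{n})$. For the numerator I would establish the uniform estimate
\[
\max_{k\ge 0}\PP{\textnormal{Poisson}(\mu) = k} \le \frac{1}{\sqrt{\mu}} \quad\textnormal{for every }\mu>0.
\]
When $\mu<1$ this is immediate, since the left side is at most $1$. When $\mu\ge 1$ the mode is $k_0=\lfloor\mu\rfloor\ge 1$; writing $\mu = k_0+\theta$ with $\theta\in[0,1)$, the Stirling lower bound $k_0!\ge\sqrt{2\pi k_0}(k_0/e)^{k_0}$ together with $(1+\theta/k_0)^{k_0}\le e^{\theta}$ gives $\PP{\textnormal{Poisson}(\mu)=k_0}\le 1/\sqrt{2\pi k_0}$, which is at most $1/\sqrt{\pi\mu}\le 1/\sqrt{\mu}$ since $k_0=\lfloor\mu\rfloor\ge \mu/2$.

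Plugging these two bounds into the first display with $\mu = nq_m$ yields
\[
p_{n,\mathbf{q}}(\mathbf{i}) \le C_0\sqrt{n}\cdot\prod_{m=1}^M\frac{1}{\sqrt{nq_m}} = \frac{C_0\sqrt{n}}{\sqrt{n^{M}\prod_{m=1}^M q_m}} = \frac{C_0}{\sqrt{n^{M-1}\prod_{m=1}^M q_m}},
\]
which is the asserted bound (indeed with a constant independent of $M$). I do not expect a genuine obstacle: the two ingredients are the (standard) Poissonization identity and the uniform-in-$\mu$ bound on the Poisson mode, and the only place that wants a little care is the regime $q_m\to 0$, i.e.\ $\mu\to 0$---precisely what Poissonization handles cleanly, and the reason I split the Poisson estimate at $\mu=1$ rather than chasing a single $\Theta(1/\sqrt{\mu})$ asymptotic.
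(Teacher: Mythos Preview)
Your Poissonization argument is correct and complete: the conditional identity $p_{n,\mathbf{q}}(\mathbf{i}) = \big(\prod_m \PP{Z_m=i_m}\big)/\PP{\textnormal{Poisson}(n)=n}$ is standard, your Stirling lower bound on the denominator is fine, and your uniform bound $\max_k\PP{\textnormal{Poisson}(\mu)=k}\le 1/\sqrt{\mu}$ is verified carefully in both regimes $\mu<1$ and $\mu\ge 1$ (the inequality $\lfloor\mu\rfloor\ge\mu/2$ for $\mu\ge 1$ follows from $\mu<\lfloor\mu\rfloor+1\le 2\lfloor\mu\rfloor$).

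This is a genuinely different route from the paper. The paper proceeds by induction on $M$: the base case $M=2$ bounds the Binomial mode via the Berry--Esseen theorem, and the inductive step decomposes $\textnormal{Multinomial}(n,\mathbf{q})$ as a $\textnormal{Multinomial}(n,\mathbf{q}')$ over $M-1$ categories (merging the last two bins) times a conditional Binomial split of the merged bin, yielding $C_M = C_2^{M-1}$. Your Poissonization is cleaner in several respects: it is a one-shot argument with no induction, it avoids invoking Berry--Esseen (replacing it by an elementary Stirling estimate on the Poisson mode), and it delivers an \emph{absolute} constant rather than one growing exponentially in $M$. The paper's approach, by contrast, stays entirely within finite-sample multinomial arithmetic and may feel more self-contained to a reader unfamiliar with the Poissonization device, but at the cost of a worse constant and a heavier external input (Berry--Esseen).
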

\begin{proof}[Proof of Lemma~\ref{lem:multinom}]
We will prove the result by induction. First consider the case $M=2$---then the Multinomial$(n,\mathbf{q})$ distribution is equivalent to the Binomial$(n,q_1)$ distribution. We will use a CLT argument. Specifically, by the Berry--Esseen theorem, 
\[\sup_{x\in\R}\left|\PP{\textnormal{Binomial}(n,q_1)\leq x} - \Phi\left(\frac{x - n\mu}{\sqrt{n}\sigma}\right)\right|\leq 0.56 \frac{\rho}{\sqrt{n}\sigma^3}\]
where $\mu = q_1$, $\sigma^2=q_1(1-q_1)$, and $\rho = q_1(1-q_1)(q_1^2+(1-q_1)^2\leq \sigma^2$ are calculated from the moments of the Bernoulli. Therefore,
\[\sup_{x\in\R}\left|\PP{\textnormal{Binomial}(n,q_1)\leq x} - \Phi\left(\frac{x - nq_1}{\sqrt{nq_1(1-q_1)}}\right)\right|\leq  \frac{0.56}{\sqrt{nq_1(1-q_1)}}.\]
Then for any $\mathbf{i}=(i_1,i_2)$
\begin{multline*}p_{n,\mathbf{q}}(\mathbf{i}) = \PP{\textnormal{Binomial}(n,q_1)=i_1}
\leq \PP{\textnormal{Binomial}(n,q_1)\leq i_1} - \PP{\textnormal{Binomial}(n,q_1)\leq i_1-\epsilon} \\\leq \left(\Phi\left(\frac{i_1 - nq_1}{\sqrt{nq_1(1-q_1)}}\right) - \Phi\left(\frac{i_1-\epsilon - nq_1}{\sqrt{nq_1(1-q_1)}}\right)\right) + \frac{1.12}{\sqrt{nq_1(1-q_1)}}. \end{multline*}
Taking $\epsilon\to0$ proves that
\[p_{n,\mathbf{q}}(\mathbf{i}) \leq  \frac{1.12}{\sqrt{nq_1(1-q_1)}} =  \frac{1.12}{\sqrt{nq_1q_2}}.\]
Setting $C_2 = 1.12$, this completes the proof for the case $M=2$.

Now we proceed by induction. Suppose $M\geq 3$. Let
\[\mathbf{q}' = (q_1,\dots,q_{M-2},q_{M-1}+q_M)\in\Delta_{M-1}\]
and
\[\mathbf{q}'' = \left(\frac{q_{M-1}}{q_{M-1}+q_M}, \frac{q_M}{q_{M-1}+q_M}\right)\in\Delta_2.\]
Then we can decompose the Multinomial$(n,\mathbf{q})$ PMF as follows:
\[p_{n,\mathbf{q}}(\mathbf{i}) = p_{n,\mathbf{q}'}((i_1,\dots,i_{M-2},i_{M-1}+i_M)) \cdot p_{i_{M-1}+i_M,\mathbf{q}''}((i_{M-1},i_M)). \]
The intuition is that first we sample over $M-1$ categories (by merging bins $M-1$ and $M$ into a single bin), then we sample from a Binomial to separate this last merged bin into its two components.
By induction,
\[p_{n,\mathbf{q}'}((i_1,\dots,i_{M-2},i_{M-1}+i_M)) \leq \frac{C_{M-1}}{\sqrt{n^{M-2} \cdot\prod_{m=1}^{M-2}q_m \cdot (q_{M-1}+q_M)}}\]
and
\[p_{i_{M-1}+i_M,\mathbf{q}''}((i_{M-1},i_M)) \leq \frac{C_2}{\sqrt{n \cdot \left(\frac{q_{M-1}}{q_{M-1}+q_M}\right) \cdot \left(\frac{q_M}{q_{M-1}+q_M}\right)}} \leq \frac{C_2 \sqrt{q_{M-1}+q_M}}{\sqrt{nq_{M-1}q_M}}.\]
Combining these calculations, and setting $C_M=C_{M-1}C_2$, we have proved the desired bound.
\end{proof}

\begin{Lemma}\label{lem:partition}
    Fix any $M\geq 2$. Let $P$ be a distribution on $\cZ$, with \[\sup_{z\in\cZ}P(\{z\}) \leq \gamma\] for some $\gamma\in[0,\frac{1}{M-1})$. Then there exists a partition $\cZ = C_1 \cup \dots \cup C_M$ such that \[\min_{m=1,\dots,M}P(C_m) \geq \min\left\{\frac{1}{2M-1}, 1 - (M-1)\gamma\right\}.\] 
\end{Lemma}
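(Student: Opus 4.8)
The plan is to establish the following homogenized restatement: for any finite measure $\nu$ on $\cZ$ with total mass $T>0$ and every atom of mass at most $\gamma$, and any integer $M\geq 1$ with $(M-1)\gamma<T$, there is a partition $\cZ=C_1\cup\dots\cup C_M$ with $\min_m\nu(C_m)\geq L:=\min\{T/(2M-1),\,T-(M-1)\gamma\}$; taking $T=1$ recovers Lemma~\ref{lem:partition}. Working with general (not necessarily probability) measures is what lets a sub-measure inherit the same atom bound without renormalizing. The engine is an auxiliary claim, which I would prove first: if a finite measure has all atoms of mass at most $\rho$ and total mass at least $(2K-1)\rho$ for an integer $K\geq 1$, then it admits a partition into $K$ parts each of mass at least $\rho$. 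I would prove this by induction on $K$; the base $K=1$ is immediate. For the step I first record a ``discrete intermediate value'' fact: for a measure with atoms at most $\rho'$ and any level $\ell$ below its total mass, some set has measure in $[\ell,\ell+\rho']$ --- build the set up from $\emptyset$ by first exhausting the non-atomic part continuously (a non-atomic measure splits into pieces of arbitrary prescribed sizes) and then adjoining the atoms one by one, so the running measure increases with jumps of size at most $\rho'$. Applied with $\ell=\rho$, this peels off a first piece $C_1$ of measure in $[\rho,2\rho]$; the residual measure still has atoms at most $\rho$ and total mass at least $(2K-1)\rho-2\rho=(2(K-1)-1)\rho$, so the inductive hypothesis provides the remaining $K-1$ parts.

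With the auxiliary claim in hand, I would prove the main statement by peeling off the heavy atoms. Let $j$ be the number of atoms of mass strictly greater than $L$; since each contributes more than $L$, we have $j<T/L<\infty$. If $j\geq M$, place $M$ distinct heavy atoms into the $M$ bins (one each) and dump all remaining mass into $C_1$: every bin then has measure $>L$. If $j<M$, place the $j$ heavy atoms into $C_1,\dots,C_j$ (each now of measure $>L$) and apply the auxiliary claim, with $\rho=L$ and $K=M-j$, to $\cZ$ minus the heavy-atom points --- a measure whose atoms are now all at most $L$ and whose total mass $T'$ satisfies $T'\geq T-j\gamma$ --- to split that remainder into $M-j$ parts each of measure at least $L$. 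The two pieces together give the partition.

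The one genuine computation is the hypothesis check for the auxiliary claim in the case $j<M$, namely $T-j\gamma\geq (2(M-j)-1)L$. Since $L$ is a minimum, it is enough that $T-j\gamma$ dominate the term corresponding to whichever expression defines $L$. If $\gamma\leq 2T/(2M-1)$ (so $L=T/(2M-1)$) I would verify $T-j\gamma\geq (2(M-j)-1)\cdot\frac{T}{2M-1}$, which on clearing denominators becomes $2jT\geq (2M-1)j\gamma$, true by the regime hypothesis; if $\gamma\geq 2T/(2M-1)$ (so $L=T-(M-1)\gamma$) I would verify $T-j\gamma\geq (2(M-j)-1)(T-(M-1)\gamma)$, which after the substitution $p:=M-1-j\geq 0$ collapses to $p(2M-1)\gamma\geq 2pT$, again true by the regime hypothesis; the cases where the cancelled factor ($j$, respectively $M-1-j$) vanishes --- in particular $j=0$, where the claim reduces to $T\geq (2M-1)L$, immediate from the definition of $L$ --- are trivial.

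I expect the main friction to be bookkeeping rather than depth: phrasing the discrete-IVT fact precisely when there are infinitely many atoms alongside a non-atomic part (the single place non-atomicity enters), and confirming that the inequality $T-j\gamma\geq (2(M-j)-1)L$ reduces, in each regime, exactly to $\gamma\leq 2T/(2M-1)$ or $\gamma\geq 2T/(2M-1)$ in every corner case --- including checking that $j\geq M$ really does force each bin to receive a heavy atom, and that the crude bound $T'\geq T-j\gamma$ on the residual mass is already tight enough. There is no circularity: the auxiliary claim's induction runs over $K\leq M$, but its hypothesis (atoms $\leq\rho$ and total mass $\geq(2K-1)\rho$) is far stronger than what Lemma~\ref{lem:partition} assumes, and it is proved from scratch, so the main argument only ever invokes it, never the lemma itself.
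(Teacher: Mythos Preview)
Your proposal is correct and takes a genuinely different organizational route from the paper. The paper inducts directly on $M$ for the lemma itself: it invokes an auxiliary fact (its Lemma~\ref{lem:partition_2}, giving a set of measure in $[\gamma/2,\gamma]$) with $\gamma$ replaced by $\max\{\gamma,2/(2M-1)\}$ to peel off $C_1$, then renormalizes to the conditional distribution on the complement and applies the inductive hypothesis with $M-1$; the use of $\max\{\gamma,2/(2M-1)\}$ is what unifies the two regimes you separate. Your approach instead first isolates the heavy atoms (mass $>L$) as singleton bins, and then applies a cleaner auxiliary claim---atoms $\le\rho$, total mass $\ge(2K-1)\rho$ implies a $K$-partition with each piece $\ge\rho$---to the light remainder, with the two-regime case analysis appearing only in the hypothesis check $T-j\gamma\ge(2(M-j)-1)L$. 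What your route buys is modularity: working with unnormalized measures avoids tracking how the atom bound inflates under conditioning, and the auxiliary claim is a clean stand-alone statement. What the paper's route buys is that the single device $\max\{\gamma,2/(2M-1)\}$ absorbs both your heavy-atom preprocessing and your regime split, so no separate case analysis is needed. The underlying ``discrete IVT plus peel one piece'' mechanism is the same in both.
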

\begin{proof}[Proof of Lemma~\ref{lem:partition}] 
First, 
 apply Lemma~\ref{lem:partition_2} (with $\max\left\{\gamma, \frac{2}{2M-1}\right\}$ in place of $\gamma$), to find some $C_1\subseteq\cZ$ with
 \begin{equation}\label{eqn:induction_C1}\frac{\max\left\{\gamma, \frac{2}{2M-1}\right\}}{2}\leq P(C_1) \leq \max\left\{\gamma, \frac{2}{2M-1}\right\}.\end{equation}
In particular, this means
 \[P(C_1)\geq  \frac{1}{2M-1}.\]
If $M=2$, then we set $C_2 = \cZ \backslash C_1$, and we then have $P(C_2) = 1-P(C_1) \geq 1-\gamma = 1-(M-1)\gamma$, therefore the proof is complete.

Otherwise, if $M\geq 3$, we proceed by induction. Let $M'=M-1\geq 2$, let $\cZ' = \cZ \backslash C_1$, and let $P'$ be the distribution of $Z\sim P$ conditional on the event $Z\not\in C_1$ (i.e., $P'$ is supported on $\cZ'$). Then
\[\sup_{z\in\cZ'}P'(\{z\}) = \sup_{z\in\cZ'} \frac{P(\{z\})}{P(\cZ\backslash C_1)} \leq \frac{\gamma}{P(\cZ\backslash C_1)} \leq \frac{\max\left\{\gamma, \frac{2}{2M-1}\right\}}{P(\cZ\backslash C_1)} =: \gamma'.\]
Since $\gamma < \frac{1}{M-1}$ by assumption, therefore $\max\left\{\gamma, \frac{2}{2M-1}\right\}< \frac{1}{M-1}$. Since $P(\cZ\backslash C_1)\geq 1-\max\left\{\gamma, \frac{2}{2M-1}\right\}$ by~\eqref{eqn:induction_C1},
\[ \gamma' = \frac{\max\left\{\gamma, \frac{2}{2M-1}\right\}}{P(\cZ\backslash C_1)} \leq \frac{\max\left\{\gamma, \frac{2}{2M-1}\right\}}{1-\max\left\{\gamma, \frac{2}{2M-1}\right\}} < \frac{\frac{1}{M-1}}{1-\frac{1}{M-1}} = \frac{1}{M-2} = \frac{1}{M'-1}.\]

Then by induction, applying this lemma with $M',P',\cZ',\gamma'$ in place of $M,P,\cZ,\gamma$, we can construct a partition
$\cZ' = C'_1 \cup \dots \cup C'_{M'}$ such that 
\[\min_{m=1,\dots,M'} P'(C'_m) \geq  \min\left\{\frac{1}{2M'-1},1 - (M'-1)\gamma'\right\}.\]
Plugging in our definitions of $M',P',\gamma'$, and
rearranging terms,
\begin{align*}
    \min_{m=1,\dots,M-1} P(C'_m) 
    &\geq \min\left\{\frac{P(\cZ\backslash C_1)}{2M-3},P(\cZ\backslash C_1) - (M-2)\max\left\{\gamma, \frac{2}{2M-1}\right\}\right\}\\
    &\geq  \min\left\{\frac{1-\max\left\{\gamma, \frac{2}{2M-1}\right\}}{2M-3},1-(M-1)\max\left\{\gamma, \frac{2}{2M-1}\right\}\right\}\\
    &= 1-(M-1)\max\left\{\gamma, \frac{2}{2M-1}\right\}\\
    &=\min\left\{\frac{1}{2M-1},1-(M-1)\gamma\right\},
\end{align*}
where the second step holds by~\eqref{eqn:induction_C1}, and the third step holds since $\frac{1-t}{2M-3} \geq 1-(M-1)t$ for any $t\geq \frac{2}{2M-1}$.

Finally, define $C_m = C'_{m-1}$ for $m=2,\dots,M$. Then we have constructed the desired partition.
\end{proof}

\begin{Lemma}\label{lem:partition_2}
    Fix any $M\geq 2$. Let $P$ be a distribution on $\cZ$, with \[\sup_{z\in\cZ}P(\{z\}) \leq \gamma\] for some $\gamma \in [0,1]$. Then there exists some $C\subseteq \cZ$ such that 
    \[\frac{\gamma}{2} \leq P(C) \leq \gamma.\]
\end{Lemma}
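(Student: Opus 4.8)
The plan is to build $C$ by accumulating mass in increments that are each strictly smaller than $\gamma/2$, and to stop the first time the accumulated mass reaches $\gamma/2$; since the last increment is $<\gamma/2$, the total at that moment is $<\gamma$, so it lands in $[\gamma/2,\gamma)$ as required. (The hypothesis $M\geq2$ plays no role in the conclusion and can be ignored.) First I would dispose of the easy cases. If $\gamma=0$, then every singleton has $P(\{z\})\le 0$, i.e. $P(\{z\})=0$, but that is not needed: simply take $C=\emptyset$, so $P(C)=0\in[0,0]$. If instead $\gamma>0$ and some $z\in\cZ$ has $P(\{z\})\geq\gamma/2$, take $C=\{z\}$: this works since also $P(\{z\})\le\gamma$ by hypothesis. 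So from now on assume $\gamma>0$ and every singleton has mass strictly less than $\gamma/2$.

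Under this assumption, the key step is to produce a countable measurable partition $\cZ=\bigsqcup_{n\geq1}A_n$ with $P(A_n)<\gamma/2$ for every $n$. Given such a partition, set $S_k=\sum_{n\le k}P(A_n)$, so that $S_0=0$, $S_k\uparrow P(\cZ)=1\geq\gamma/2$, and $S_k-S_{k-1}=P(A_k)<\gamma/2$. Hence there is a first index $k$ with $S_k\geq\gamma/2$, and for that $k$ we have $S_k=S_{k-1}+P(A_k)<\gamma/2+\gamma/2=\gamma$. Taking $C=A_1\cup\dots\cup A_k$ then gives $P(C)=S_k\in[\gamma/2,\gamma)$, which is the conclusion.

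To construct the partition, I would split $P$ into its atomic part (supported on the at most countably many atoms, each of mass $<\gamma/2$ by our standing assumption) and its non-atomic part: each atom becomes its own block, and the non-atomic part, being a finite non-atomic measure of total mass $\le1$, can be cut into finitely many blocks of mass $<\gamma/2$ using the standard fact that a finite non-atomic measure attains every value between $0$ and its total mass (Sierpi\'nski's theorem). Concatenating these blocks yields the desired countable partition, completing the argument.

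I expect the main obstacle to be exactly this last step: justifying the fine partition in full measure-theoretic generality. When $\cZ$ is a standard Borel space (or, more generally, whenever the atoms of $P$ are singletons) the construction is clean; on a pathological measurable space one can have a non-atomic-looking lump of mass that no measurable subset divides, and then the statement can genuinely fail, so the proof implicitly relies on the regularity of $\cZ$. If one prefers to avoid Sierpi\'nski's theorem, an alternative is to set $s=\sup\{P(B):B\textnormal{ measurable},\ P(B)\le\gamma/2\}$, use the (classical) fact that the range of a finite measure is a closed subset of $[0,1]$ to conclude that $s$ is attained by some $B^*$, and then derive a contradiction from $s<\gamma/2$ by showing that the leftover mass $\cZ\setminus B^*$ (of mass $1-s>1-\gamma/2\geq\gamma/2$) must admit a measurable subset of mass in $(0,\gamma/2-s]$, whose union with $B^*$ would exceed $s$ while staying $\le\gamma/2$; ruling out the non-splittable leftover piece again quietly uses that atoms are singletons of mass $<\gamma/2$.
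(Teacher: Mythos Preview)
Your proof is correct and uses the same core ingredients as the paper's: the atomic/nonatomic decomposition of $P$, the intermediate-value property of nonatomic measures (the paper invokes a result from Dudley; you invoke Sierpi\'nski's theorem), and a greedy accumulation of small pieces until the total mass first reaches $\gamma/2$. The organization differs slightly: the paper splits into two cases according to whether $P_1(\cZ)\geq 1-\gamma/2$---if so, it carves out a set of mass exactly $\gamma/2$ directly from the nonatomic part; if not, it accumulates atoms only---whereas you take a unified route by first refining all of $\cZ$ into a countable partition with every block of mass strictly below $\gamma/2$ and then running a single accumulation over that partition. Your version is arguably more streamlined, and you are also more explicit than the paper about the implicit regularity assumption (that atoms of $P$ are singletons, so that the residual part with no point masses really is nonatomic in the measure-theoretic sense needed for Sierpi\'nski/Dudley), which both arguments quietly rely on.
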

\begin{proof}[Proof of Lemma~\ref{lem:partition_2}]
    Let $N\in\{0,1,2,\dots\}\cup\{+\infty\}$ be the number of point masses of $P$.
    First we decompose $P$ into discrete and nonatomic components,
    \[P = P_0 + P_1,\]
    where $P_1$ is a nonatomic measure on $\cZ$, while $P_0$ is a discrete measure supported on $\cZ_0\subseteq \cZ$. If $P_1(\cZ) \geq 1 - \frac{\gamma}{2}$, then by \citet[Proposition A.1]{dudley2011concrete}, we can find some $C'\subseteq\cZ$ with $P_1(C') = 1 - \frac{\gamma}{2}$. 
    Then defining 
    $C = (\cZ \backslash C') \cup \cZ_0$, we have \[P_1(C) = P_1(\cZ \backslash C') = P_1(\cZ) - P_1(C') ,\] where the first step holds since $P_1$ is nonatomic and so $P_1(\cZ_0)=0$. And,
    \[P_0(C) = P_0(C \cap \cZ_0) = P_0(\cZ_0) = P_0(\cZ) = 1-P_1(\cZ),\]
    since $C\supseteq\cZ_0$, and $P_0$ is supported on $\cZ_0$. Therefore,
    \[P(C) = P_0(C) + P_1(C) = 1 - P_1(C') = \frac{\gamma}{2}.\]
     This completes the proof for this case.

    If instead $P_1(\cZ) < 1 - \frac{\gamma}{2}$, then we have $P_0(\cZ)> \frac{\gamma}{2}$. Since $P_0$ is a discrete distribution, we can therefore find a finite set of distinct points $z_1,\dots,z_K\in\cZ$ such that $\sum_{k=1}^KP_0(\{z_k\}) >\frac{\gamma}{2}$. Without loss of generality assume that
    \[P(\{z_1\})\geq \dots \geq P(\{z_K\}).\]
    Since $\sum_{k=1}^KP(\{z_k\}) =\sum_{k=1}^KP_0(\{z_k\}) >\frac{\gamma}{2}$, the integer
    \[K_0 = \min\left\{k_0 : \sum_{k=1}^{k_0} P(\{z_k\}) \geq \frac{\gamma}{2}\right\}\]
    is well-defined (i.e., the set above is nonempty). 
    Now define $C = \{z_1,\dots,z_{K_0}\}$.  By definition of $K_0$ we have
    \[ P(C) =  \sum_{k=1}^{K_0} P(\{z_k\})\geq \frac{\gamma}{2}.\]
    To complete the proof we need to show that $P(C) \leq \gamma$. If $K_0=1$, then
    \[ P(C) = P(\{z_1\}) \leq \gamma, \]
    by assumption. If instead $K_0\geq 2$, then
    \[P(C) = \sum_{k=1}^{K_0}P(\{z_k\}) = P(\{z_{K_0}\}) + \sum_{k=1}^{K_0-1}P(\{z_k\})< \frac{\gamma}{2}+\frac{\gamma}{2} = \gamma,\]
    where the last step holds since $P(\{z_{K_0}\}) \leq P(\{z_1\}) \leq \sum_{k=1}^{K_0-1}P(\{z_k\})$, and we must have $\sum_{k=1}^{K_0-1}P(\{z_k\}) < \gamma/2$ (since if not, this would contradict the definition of $K_0$). This completes the proof.
\end{proof}

\end{document}